\theoremstyle{plain}
\newtheorem{theorem}{Theorem}[section]
\theoremstyle{definition}
\newtheorem{assumption}[theorem]{Assumption}
\theoremstyle{remark}
\icmltitlerunning{Fisher-Orthogonal Projected Natural Gradient Descent}
\begin{document}

\twocolumn[
  \icmltitle{Fisher-Orthogonal Projected Natural Gradient Descent for Continual Learning}

  % It is OKAY to include author information, even for blind submissions: the
  % style file will automatically remove it for you unless you've provided
  % the [accepted] option to the icml2026 package.

  % List of affiliations: The first argument should be a (short) identifier you
  % will use later to specify author affiliations Academic affiliations
  % should list Department, University, City, Region, Country Industry
  % affiliations should list Company, City, Region, Country

  % You can specify symbols, otherwise they are numbered in order. Ideally, you
  % should not use this facility. Affiliations will be numbered in order of
  % appearance and this is the preferred way.
  \icmlsetsymbol{equal}{*}

  \begin{icmlauthorlist}
    \icmlauthor{Ishir Garg}{berk}
    \icmlauthor{Neel Kolhe}{berk}
    \icmlauthor{Andy Peng}{berk}
    \icmlauthor{Rohan Gopalam}{berk}
  \end{icmlauthorlist}

  \icmlaffiliation{berk}{University of California, Berkeley}
  \icmlcorrespondingauthor{Ishir Garg}{ishirgarg@berkeley.edu}
  
  % You may provide any keywords that you find helpful for describing your
  % paper; these are used to populate the "keywords" metadata in the PDF but
  % will not be shown in the document
  \icmlkeywords{Machine Learning, ICML}

  \vskip 0.3in
]

% this must go after the closing bracket ] following \twocolumn[ ...

% This command actually creates the footnote in the first column listing the
% affiliations and the copyright notice. The command takes one argument, which
% is text to display at the start of the footnote. The \icmlEqualContribution
% command is standard text for equal contribution. Remove it (just {}) if you
% do not need this facility.

% Use ONE of the following lines. DO NOT remove the command.
% If you have no special notice, KEEP empty braces:
\printAffiliationsAndNotice{}  % no special notice (required even if empty)
% Or, if applicable, use the standard equal contribution text:
% \printAffiliationsAndNotice{\icmlEqualContribution}

\begin{abstract}
Continual learning aims to enable neural networks to acquire new knowledge on sequential tasks. However, the key challenge in such settings is to learn new tasks without catastrophically forgetting previously learned tasks. We propose the Fisher-Orthogonal Projected Natural Gradient Descent (FOPNG) optimizer, which enforces Fisher-orthogonal constraints on parameter updates to preserve old task performance while learning new tasks. Unlike existing methods that operate in Euclidean parameter space, FOPNG projects gradients onto the Fisher-orthogonal complement of previous task gradients. This approach unifies natural gradient descent with orthogonal gradient methods within an information-geometric framework. We provide theoretical analysis deriving the projected update, describe efficient and practical implementations using the diagonal Fisher, and demonstrate strong results on standard continual learning benchmarks such as Permuted-MNIST, Split-MNIST, Rotated-MNIST, Split-CIFAR10, and Split-CIFAR100. Our code is available at \url{https://github.com/ishirgarg/FOPNG}. %Our code is included in the supplementary material for reproducibility.
\end{abstract}

\section{Introduction}

Continual learning addresses the fundamental challenge of enabling machine learning models to learn sequentially from non-stationary data streams without forgetting previously acquired knowledge---a problem known as catastrophic forgetting \citep{mccloskey1989catastrophic}. When a neural network is trained on a sequence of tasks, gradient updates that improve performance on the current task typically degrade performance on previously learned tasks. The severity of catastrophic forgetting presents a significant barrier to deploying neural networks in realistic scenarios where data arrives sequentially and storage of all past data is impractical. Biological learning systems exhibit remarkable stability-plasticity trade-offs, maintaining existing knowledge while incorporating new information. Achieving similar capabilities in artificial neural networks remains an open challenge that has motivated substantial research.

Existing methods for continual learning fall into three broad categories: regularization-based methods that penalize changes to important parameters, replay-based methods that store and rehearse examples from previous tasks, and architecture-based methods that allocate dedicated parameters for each task. While these approaches have achieved varying degrees of success, they share a common limitation: they operate in Euclidean parameter space, which fails to account for the natural geometry of probabilistic models.

We propose the \textbf{Fisher-Orthogonal Projected Natural Gradient Descent (FOPNG)} optimizer, a geometrically principled optimizer that addresses catastrophic forgetting by enforcing orthogonality constraints in the Fisher-Riemannian manifold rather than Euclidean space. The key insight is that the Fisher information matrix provides a coordinate-invariant measure of how parameter changes affect model predictions. By projecting gradients onto the Fisher-orthogonal complement of previous task gradients, FOPNG reduces the impact that updates have on the output distributions of prior tasks while making efficient progress on the current task.

Our main contributions are: (1) a theoretical framework that unifies natural gradient descent with orthogonal gradient methods within information geometry, (2) a closed-form solution for the constrained optimization problem defining the projected update, and (3) practical algorithms using efficient Fisher approximations that demonstrate strong empirical results on continual-learning benchmarks.

\section{Background and Related Work}

In this section, we provide a comprehensive background on the foundational concepts underlying FOPNG and situate our work within the broader continual learning literature. We begin with the mathematical foundations of information geometry and natural gradients, then review existing approaches to continual learning, and finally discuss related work on efficient Fisher approximations.

\subsection{Information Geometry and the Fisher Information Matrix}

The Fisher information matrix lies at the heart of our approach, providing the geometric structure that enables principled continual learning. For a parametric model $p_\theta(y|x)$ with parameters $\theta \in \mathbb{R}^p$, the Fisher information matrix is
\begin{multline}
F(\theta) = \\\mathbb{E}_{x \sim q(x)}\mathbb{E}_{y \sim p_\theta(\cdot|x)}\nabla_\theta \log p_\theta(y|x) \nabla_\theta \log p_\theta(y|x)^\top
\end{multline}
where $q(x)$ is the data distribution \citep{amari1998natural}. Intuitively, it represents how fast a probability distribution changes at a point with respect to small perturbations in its parameter. The Fisher matrix has several important properties that make it fundamental to our approach.

The Fisher matrix defines a Riemannian metric on the parameter manifold, turning the space of probability distributions into a curved geometric space. Under this metric, the infinitesimal distance between $p_\theta$ and $p_{\theta+d\theta}$ is $ds^2 = d\theta^\top F(\theta) d\theta$. This geometric perspective was developed extensively by Amari and others in the field of information geometry \citep{amari1998natural}.

A powerful property of the Fisher metric is that it is invariant under reparameterization.

\begin{theorem}[Reparameterization Invariance of Fisher Norm]
\label{thm:reparam_invariance}
Let $\phi = \phi(\theta)$ be a smooth bijection with Jacobian $J = \frac{\partial \phi}{\partial \theta}$. Let $F_\theta$ and $F_\phi$ denote the Fisher information matrices in the $\theta$ and $\phi$ parameterizations, respectively. Then for any tangent vector $\delta\theta$,
\begin{equation}
\|\delta\theta\|_{F_\theta} = \|\delta\phi\|_{F_\phi}
\end{equation}
where $\delta\phi = J \delta\theta$.
\end{theorem}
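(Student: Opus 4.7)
The plan is to establish the transformation law for the Fisher information matrix under the change of coordinates $\theta \mapsto \phi(\theta)$, and then show that the quadratic form defining the Fisher norm is preserved by design once this transformation law is in hand. The theorem is really a statement that $F(\theta)$ is a $(0,2)$-tensor on the parameter manifold, so the proof amounts to verifying the tensorial transformation rule via the chain rule.

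First I would observe that since $\phi$ is a smooth bijection, the model can be written in either parameterization and the two log-likelihoods agree pointwise: $\log p_\theta(y \mid x) = \log p_{\phi(\theta)}(y \mid x)$. Differentiating with respect to $\theta$ and applying the chain rule gives
\begin{equation}
\nabla_\theta \log p_\theta(y \mid x) = J^\top \, \nabla_\phi \log p_\phi(y \mid x),
\end{equation}
where $J = \partial \phi / \partial \theta$. Substituting this identity into the definition of $F_\theta$ and pulling the constant matrix $J^\top$ and $J$ outside the expectations (they do not depend on $x$ or $y$, only on $\theta$) yields the transformation law
\begin{equation}
F_\theta = J^\top F_\phi J.
\end{equation}

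Next I would use this to compute the Fisher norm in the $\theta$-coordinates directly:
\begin{equation}
\|\delta\theta\|_{F_\theta}^2 = \delta\theta^\top F_\theta\, \delta\theta = \delta\theta^\top J^\top F_\phi J\, \delta\theta = (J\delta\theta)^\top F_\phi (J\delta\theta).
\end{equation}
Since by hypothesis $\delta\phi = J\delta\theta$, the right-hand side equals $\delta\phi^\top F_\phi\, \delta\phi = \|\delta\phi\|_{F_\phi}^2$. Taking square roots (the quadratic forms are nonnegative because $F$ is positive semidefinite) gives the claimed equality.

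I do not anticipate a genuine obstacle: the proof is essentially a one-line chain-rule computation packaged as a tensor transformation. The only subtlety worth flagging is the interchange of differentiation and expectation implicit in writing $F_\theta = J^\top F_\phi J$, which is justified under the usual regularity conditions on $p_\theta$ (dominated convergence applied to a score function with finite second moment); these conditions are already implicit in the well-definedness of $F_\theta$ itself, so I would not belabor them in the formal write-up.
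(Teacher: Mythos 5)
Your proof is correct and takes essentially the same approach as the paper: derive the tensorial transformation law for the Fisher matrix via the chain rule on score functions, then substitute into the quadratic form. The only cosmetic difference is the direction of the computation — you write $\nabla_\theta = J^\top \nabla_\phi$ and hence $F_\theta = J^\top F_\phi J$, while the paper uses the inverse relation $\nabla_\phi = J^{-\top}\nabla_\theta$ and $F_\phi = J^{-\top} F_\theta J^{-1}$; your direction is marginally cleaner since it avoids inverting $J$ in the intermediate step.
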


\begin{proof}
See Appendix \ref{sec:proof_reparam}.
\end{proof}

Intuitively, this means that optimization in the Fisher metric respects the intrinsic structure of the probability model rather than depending on arbitrary choices of parameterization. Euclidean gradient descent, on the contrary, is not reparameterization invariant and can behave very differently under coordinate changes.

Another key property of the Fisher information for FOPNG is that the Fisher matrix locally approximates the KL divergence between nearby distributions.

\begin{theorem}[Local Quadratic Approximation of KL Divergence]
\label{thm:kl_approximation}
For a small parameter step $v$, the KL divergence satisfies
\begin{equation}
\text{KL}(p_\theta \| p_{\theta+v}) = \frac{1}{2} v^\top F_\theta v + o(\|v\|^2).
\end{equation}
\end{theorem}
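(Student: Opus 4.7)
The plan is to perform a second-order Taylor expansion of the KL functional $f(v) := \mathrm{KL}(p_\theta \Vert p_{\theta+v})$ around $v=0$ and identify the coefficients with the Fisher information. Writing out
\begin{equation}
f(v) = \mathbb{E}_{x \sim q(x)}\mathbb{E}_{y \sim p_\theta(\cdot|x)}\bigl[\log p_\theta(y|x) - \log p_{\theta+v}(y|x)\bigr],
\end{equation}
I would apply Taylor's theorem (assuming the standard regularity conditions that let me swap differentiation and expectation) to the integrand $-\log p_{\theta+v}(y|x)$ about $v=0$, which gives $f(v) = f(0) + \nabla f(0)^\top v + \tfrac{1}{2} v^\top H v + o(\|v\|^2)$ with $H = \nabla_v^2 f(0)$.

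Next I would evaluate each term. The zeroth-order term $f(0) = \mathrm{KL}(p_\theta\Vert p_\theta) = 0$. For the first-order term, I would use the standard score identity $\mathbb{E}_{y\sim p_\theta}[\nabla_\theta \log p_\theta(y|x)] = 0$, obtained by differentiating $\int p_\theta(y|x)\,dy = 1$ under the integral sign; this yields $\nabla f(0) = -\mathbb{E}[\nabla_\theta \log p_\theta(y|x)] = 0$. For the Hessian, I would compute
\begin{equation}
\nabla_v^2 f(0) = -\mathbb{E}_{x,y\sim p_\theta}\bigl[\nabla_\theta^2 \log p_\theta(y|x)\bigr],
\end{equation}
and then invoke the classical information-theoretic identity $-\mathbb{E}[\nabla_\theta^2 \log p_\theta(y|x)] = \mathbb{E}[\nabla_\theta \log p_\theta(y|x)\nabla_\theta \log p_\theta(y|x)^\top] = F_\theta$. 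This last identity itself follows from differentiating the score identity once more under the integral sign.

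The main obstacle is technical rather than conceptual: rigorously justifying the two interchanges of differentiation with expectation that produce (i) the vanishing first-order term and (ii) the equivalence between the negative expected Hessian of $\log p_\theta$ and the Fisher matrix. In the standard setting of smooth parametric families with dominating integrable majorants, dominated convergence suffices, so I would either invoke standard regularity assumptions on $p_\theta$ or refer the reader to a textbook treatment (e.g., Amari). Once these interchanges are granted, the three evaluations above combine to give $f(v) = \tfrac{1}{2} v^\top F_\theta v + o(\|v\|^2)$, which is exactly the claim.
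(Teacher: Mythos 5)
Your proposal is correct and follows essentially the same route as the paper's proof: Taylor-expand $\log p_{\theta+v}$ about $v=0$, take the expectation under $p_\theta$, kill the linear term via the vanishing-score identity $\mathbb{E}_{p_\theta}[\nabla_\theta \log p_\theta] = 0$, and replace the negative expected Hessian by $F_\theta$ via the standard information-matrix identity. The only difference is that you spend a bit more care flagging the two differentiate-under-the-integral interchanges as the genuine technical burden, which the paper leaves implicit.
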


\begin{proof}
See Appendix \ref{sec:proof_kl}.
\end{proof}

This means that the Fisher norm $\|v\|_F = \sqrt{v^\top F v}$ directly quantifies the change in model predictions induced by a parameter update. Steps that are small in Fisher norm cause small changes in the output distribution, regardless of their Euclidean magnitude.

\subsection{Continual Learning: Problem Formulation and Existing Approaches}

In continual learning, we seek to learn a sequence of tasks $\{T_1, T_2, \ldots, T_n\}$ without catastrophic forgetting. At each stage, the model parameterized by $\theta$ must adapt to a new task $T_k$ with loss $L_k(\theta)$ while preserving performance on all previous tasks $T_1, \ldots, T_{k-1}$. Standard gradient descent on $L_k$ typically causes parameters to drift in directions that degrade previously learned representations.

The fundamental tension is between \emph{plasticity}---the ability to learn new information---and \emph{stability}---the preservation of old knowledge. Different methods approach this trade-off through different mechanisms, which we now review.

\subsection{Natural Gradient Descent}

Natural gradient descent exploits Fisher geometry for efficient optimization \citep{amari1998natural}. Rather than Euclidean-small steps, it takes Fisher-small steps:
\begin{equation}
\theta_{t+1} = \theta_t - \eta F(\theta_t)^{-1} \nabla_\theta L(\theta_t),
\end{equation}
derived by minimizing the loss's linear approximation subject to a KL divergence constraint. Natural gradient methods converge faster than standard gradient descent by accounting for loss landscape curvature \citep{martens2015optimizing}, and reparameterization invariance makes them robust to architecture choices affecting parameter scaling.

\subsection{Regularization-Based Methods}

Regularization methods penalize changes to task-important parameters. Elastic Weight Consolidation (EWC) \citep{kirkpatrick2017overcoming} adds:
\begin{equation}
\Omega(\theta) = \frac{\lambda}{2} \sum_{i} F_i (\theta_i - \theta_{i}^*)^2,
\end{equation}
where diagonal Fisher entries $F_i$ identify important parameters at learned parameters $\theta^*$. While pioneering Fisher information use in continual learning, EWC's diagonal approximation ignores correlations, and performance is highly sensitive to hyperparameter tuning \citep{ewc-tuning}. FOPNG achieves more stable performance across hyperparameters, requiring minimal tuning.

Synaptic Intelligence \citep{zenke2017continual} estimates parameter importance online during training. Memory Aware Synapses \citep{aljundi2018memory} learns importance unsupervised. These also share the regularization paradigm.

\subsection{Replay-Based Methods}

Replay methods store and rehearse previous task examples. Experience Replay maintains a buffer sampled during training \citep{chaudhry2019tiny}. Gradient Episodic Memory (GEM) \citep{lopezpaz2017gradient} projects gradients to prevent loss increases on stored examples. A-GEM \citep{chaudhry2018efficient} efficiently projects onto a single reference gradient.

These show strong performance when memory permits storing representative examples. GEM's constraint formulation relates to our work—FOPNG also constrains gradient updates, but in Fisher-geometric rather than Euclidean space. In particular, FOPNG deals with the continual-learning setup in which we do not have access to past task data (for example, in applications where data security is paramount), hence we compare FOPNG to baselines that also don't require storing the original training data across tasks.

\subsection{Orthogonal Gradient Methods}

Orthogonal Gradient Descent (OGD) \citep{farajtabar2020orthogonal} projects gradients onto the Euclidean orthogonal complement of previous task gradients. For gradient matrix $G = [g_1, \ldots, g_{k-1}]$:
\begin{equation}
g_{\text{proj}} = g - G(G^\top G)^{-1} G^\top g.
\end{equation}
This zeros components along previous gradients, ideally preventing interference.

While conceptually appealing, OGD inherits Euclidean geometry limitations: normal gradients do not capture the effect on the output distribution of a model. \citet{ong} combined natural gradients with OGD by first computing a projection against past gradients in \textit{Euclidean space} and pre-conditioning the resulting projection by the Fisher information. However, they found that performance decreased with this formulation. We show empirically that the projection must occur in \textit{Fisher space} to respect the natural gradient geometry. A more recent work by \citet{fop}  uses similar ideas about Fisher orthogonality, but derives a different optimizer for training with large batch sizes and does not discuss continual learning. Our work explicitly formulates continual learning in Fisher-geometric terms, deriving closed-form solutions that unify natural gradients with orthogonal projections.

\subsection{Efficient Fisher Approximations}
\label{sec:fisher_approx}

Computational challenges motivate efficient Fisher approximations crucial for scaling FOPNG.

\subsubsection{Diagonal Approximations}

Diagonal approximation retains only diagonal entries, reducing storage and computation usage. Prior methods have effectively used the  diagonal Fisher \citep{kirkpatrick2017overcoming}:
\begin{equation}
\widehat{F}_{\text{diag}}(\theta) = \frac{1}{|\mathcal{D}|} \sum_{(x,y) \in \mathcal{D}} \left[\nabla_\theta \log p_\theta(y|x)\right]^2,
\end{equation}
with element-wise squaring. 

\subsubsection{Empirical Fisher}
The empirical Fisher uses observed labels $\tilde{y}$ rather than sampling from $p_\theta(y|x)$:
\begin{equation}
\widehat{F}_{\text{emp}}(\theta) = \frac{1}{|\mathcal{D}|} \sum_{(x,\tilde{y}) \in \mathcal{D}} \nabla_\theta \log p_\theta(\tilde{y}|x) \nabla_\theta \log p_\theta(\tilde{y}|x)^\top.
\end{equation}
While \citet{kunstner2019limitations} note that it may sometimes perform poorly, the empirical Fisher remains practical for continual learning.

\subsubsection{Kronecker-Factored Approximations}

KFAC models Fisher as Kronecker products \citep{martens2015optimizing}. For layer weight $W$:
\begin{equation}
F_W \approx A \otimes G,
\end{equation}
where $A$ and $G$ are input activation and output gradient covariances. This structure enables efficient inversion while capturing curvature. EKFAC corrects eigenvalue errors \citep{laurent2018evaluation}. Other approaches include Shampoo \citep{gupta2018shampoo} and low-rank approximations \citep{grosse2016kronecker}.

The choice of approximation is critical for trading off accuracy, memory, and computation. Although we use the diagonal approximation for computational efficency, future work may explore the effect of these alternatives when applied to FOPNG.

\section{Theoretical Framework}

We now present the theoretical foundations of FOPNG. We develop the method from first principles, starting with the constrained optimization formulation and deriving the closed-form solution.

\subsection{Preliminaries and Notation}

Let $\theta \in \mathbb{R}^p$ denote model parameters and
$p_\theta(y \mid x)$ a smooth conditional distribution.
For a data distribution $q(x)$, define the Fisher information matrix
\begin{equation}
F
=
\mathbb{E}_{x \sim q}
\mathbb{E}_{y \sim p_\theta(\cdot \mid x)}
\big[
\nabla_\theta \log p_\theta(y \mid x)
\nabla_\theta \log p_\theta(y \mid x)^\top
\big].
\end{equation}

For any $u,v \in \mathbb{R}^p$, define the Fisher inner product and norm
\begin{equation}
\langle u, v \rangle_F := u^\top F v,
\qquad
\|u\|_F := \sqrt{u^\top F u}.
\end{equation}

Let $L_{\text{new}}(\theta)$ be the new-task loss and
$g = \nabla_\theta L_{\text{new}}(\theta)$ its gradient.
Let $G = [g_1, \ldots, g_m] \in \mathbb{R}^{p \times m}$ collect gradients
from previous tasks.
We denote by $F_{\text{new}}$ and $F_{\text{old}}$ the Fisher matrices
computed on new-task and previous-task data, respectively.

\begin{assumption}
\label{ass:regularity}
(Regularity Conditions)
\begin{enumerate}
    \item The score function $\nabla_\theta \log p_\theta(y \mid x)$ exists and has finite second moments, so that the Fisher information matrices are well-defined.
    \item $F_{\text{new}}$ is positive definite.
    \item $G$ has full column rank.
\end{enumerate}
\end{assumption}

In Section~\ref{sec:regularization} we show that Assumption~2 can be relaxed
using standard regularization techniques.
Assumption~3 holds in practice since the number of stored gradients is far
smaller than the parameter dimension.

\subsection{Geometric Motivation}

In information geometry, the natural gradient
$F_{\text{new}}^{-1} g$ represents the steepest descent direction for the
new-task loss under the Fisher metric.
Any learning rule that aims to be invariant to reparameterization should
therefore be expressed relative to this geometry.

We adopt the viewpoint that learning proceeds by selecting a Fisher-preconditioned direction that approximates the raw gradient
$g$ while respecting constraints imposed by previously learned tasks.
To formalize this idea, we introduce an auxiliary variable $u \in \mathbb{R}^p$
and define the actual parameter update for some scalar $c > 0$ as
\begin{equation}
v = c F_\text{new}^{-1}(g - u).
\end{equation}
The vector $u$ represents the component of the gradient that must be removed
or modified in order to satisfy constraints derived from prior tasks.

Crucially, we reason about $u$ through its Fisher-preconditioned form
$F_{\text{new}}^{-1} u$, which lives in the same geometric space as the
natural gradient.

\subsection{Problem Formulation}

FOPNG is defined by three principles, each motivated by Fisher geometry.

\paragraph{Natural-Gradient Approximation.}
We seek a Fisher-preconditioned direction $F_{\text{new}}^{-1} u$
that approximates the raw gradient $g$ as closely as possible.
This is captured by minimizing the Fisher norm of the discrepancy:
\begin{equation}
\min_u \| g - F_{\text{new}}^{-1} u \|_{F_{\text{new}}}^2.
\end{equation}
This objective measures how well the chosen direction aligns with the
natural gradient of the new task.

\paragraph{Compatibility with Previous Tasks.}
To limit interference with previously learned tasks, we require that the
Fisher-conditioned component of $u$ aligns with directions spanned by
stored gradients:
\begin{equation}
F_{\text{old}}^{-1} u \in \operatorname{Col}(G).
\end{equation}
Intuitively, we can think about $u$ as telling us how much of the previous task gradients are contained in the natural gradient update for $u$. FOPNG attempts to remove the old-task interfering component from the new task gradient.

\paragraph{Natural-Gradient Trust Region.}
Finally, we constrain the magnitude of the update in the Fisher metric of
the new task. For some learning rate $\eta > 0,$ we enforce
\begin{equation}
||v||_{F_\text{new}} = \eta. \label{eq:trust_region}
\end{equation}
This trust-region constraint bounds the KL divergence induced by the natural gradient update and helps control the learning rate across tasks. Empirically, we find that this is needed for learning stability.

\subsection{Constrained Optimization Problem}

Combining these principles yields the following constrained optimization
problem:
\begin{align}
\min_{u \in \mathbb{R}^p} \quad
& \| g - F_{\text{new}}^{-1} u \|_{F_{\text{new}}}^2
\label{eq:new_objective} \\
\text{s.t.} \quad
& F_{\text{old}}^{-1} u \in \operatorname{Col}(G),
\label{eq:col_constraint}
\end{align}
The resulting update is $v = c F_\text{new}^{-1}(g - u)$ where $c$ is chosen to satisfy Equation \ref{eq:trust_region}. 

\subsection{Deriving the FOPNG Update}

\begin{theorem}[FOPNG Update]
\label{thm:projected_update_main}
Under Assumption~\ref{ass:regularity}, the FOPNG update is
\begin{equation}
v^*
=
\eta
\frac{
F_{\text{new}}^{-1} P g
}{
\sqrt{
g^\top
P^\top F_\text{new}^{-1} P
g
}
},
\label{eq:fopng}
\end{equation}
where
\begin{equation}
P = I - F_\text{old}G (G^\top F_\text{old} F_\text{new}^{-1} F_\text{old} G)^{-1} G^\top F_\text{old},
\end{equation}
\end{theorem}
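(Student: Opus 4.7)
The plan is to exploit the fact that the constraint \eqref{eq:col_constraint} is linear in $u$ while the objective \eqref{eq:new_objective} is quadratic in $u$, reducing the full problem to an unconstrained convex quadratic in a lower-dimensional variable $\alpha \in \mathbb{R}^m$. Once the optimal $u^*$ is found and $g - u^*$ is identified as $Pg$, the trust-region constraint \eqref{eq:trust_region} is a single scalar equation that pins down the scaling $c$, and the claimed closed form for $v^*$ drops out.

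Concretely, I would parameterize every feasible $u$ as $u = F_{\text{old}} G \alpha$ with $\alpha \in \mathbb{R}^m$, which is equivalent to the column-space constraint $F_{\text{old}}^{-1} u \in \operatorname{Col}(G)$. Substituting into the Fisher-norm objective and using $\|x\|_{F_{\text{new}}}^2 = x^\top F_{\text{new}} x$ together with $F_{\text{new}} F_{\text{new}}^{-1} = I$, the cross terms collapse and the objective becomes
\begin{equation*}
g^\top F_{\text{new}} g \;-\; 2 g^\top F_{\text{old}} G \alpha \;+\; \alpha^\top \bigl(G^\top F_{\text{old}} F_{\text{new}}^{-1} F_{\text{old}} G\bigr) \alpha.
\end{equation*}
Under Assumption~\ref{ass:regularity}, the Hessian $G^\top F_{\text{old}} F_{\text{new}}^{-1} F_{\text{old}} G$ is positive definite, so the unique first-order optimum is
\begin{equation*}
\alpha^* = \bigl(G^\top F_{\text{old}} F_{\text{new}}^{-1} F_{\text{old}} G\bigr)^{-1} G^\top F_{\text{old}} g.
\end{equation*}

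Back-substitution gives $u^* = F_{\text{old}} G \alpha^*$ and
\begin{equation*}
g - u^* = \bigl(I - F_{\text{old}} G (G^\top F_{\text{old}} F_{\text{new}}^{-1} F_{\text{old}} G)^{-1} G^\top F_{\text{old}}\bigr) g = P g,
\end{equation*}
so the unscaled descent direction $F_{\text{new}}^{-1}(g - u^*)$ equals $F_{\text{new}}^{-1} P g$. Expanding $\|c F_{\text{new}}^{-1} P g\|_{F_{\text{new}}}^2$, the trust-region equation reduces to $c^2 \, g^\top P^\top F_{\text{new}}^{-1} P g = \eta^2$, and taking the positive root for $c$ reproduces exactly the formula in \eqref{eq:fopng}.

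The main obstacle, which is essentially bookkeeping rather than genuine mathematics, is justifying invertibility of the $m \times m$ Gram-like matrix $G^\top F_{\text{old}} F_{\text{new}}^{-1} F_{\text{old}} G$. Writing $M = F_{\text{old}} G$, this matrix equals $M^\top F_{\text{new}}^{-1} M$; since $F_{\text{new}}^{-1}$ is positive definite by Assumption~\ref{ass:regularity}, positive definiteness reduces to $M$ having full column rank, which in turn requires $F_{\text{old}}$ to be injective on $\operatorname{Col}(G)$. When $F_{\text{old}}$ is positive definite this is immediate, and in degenerate cases the regularized variant from Section~\ref{sec:regularization} (replacing every Fisher matrix by $F + \epsilon I$) restores invertibility without altering the structure of the derivation or the final form of $v^*$.
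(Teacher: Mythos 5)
Your derivation is essentially identical to the paper's: parameterize $u = F_{\text{old}} G \alpha$, substitute to obtain an unconstrained convex quadratic in $\alpha$, solve by first-order optimality, identify $g - u^* = Pg$, and fix the scalar $c$ via the trust-region equation $c^2 g^\top P^\top F_{\text{new}}^{-1} P g = \eta^2$. Your added remark that invertibility of $G^\top F_{\text{old}} F_{\text{new}}^{-1} F_{\text{old}} G$ actually requires $F_{\text{old}}$ to be injective on $\operatorname{Col}(G)$ --- a condition not guaranteed by Assumption~\ref{ass:regularity} as stated and addressed only implicitly via the regularization of Section~\ref{sec:regularization} --- is a small but real point the paper's own proof glosses over.
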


\begin{proof}
See Appendix~\ref{sec:proof_fopng}.
\end{proof}

The matrix $P$ projects onto the subspace that is Fisher-orthogonal to
previous-task gradients when expressed in the geometry of the new task.
The final scaling enforces the natural-gradient trust region.

\subsection{Fisher Natural Gradient (FNG): An Unconstrained Baseline}
\label{sec:fng}
In order to evaluate the effect of our projections, we derive a simpler natural-gradient update that removes the projection constraint.

If the compatibility constraint
$F_{\text{old}}^{-1} u \in \operatorname{Col}(G)$ is removed,
the problem reduces to selecting the best Fisher-conditioned approximation
to $g$ under the trust-region constraint.
This effectively yields the standard natural gradient descent \cite{amari1998natural}.

\begin{theorem}[FNG Closed-Form Solution]
\label{thm:fng}
Consider the constraints
\begin{align}
v_{\mathrm{FNG}} &= c F_{\mathrm{new}}^{-1}g
 \\
c ||v_{\mathrm{FNG}}||_{ F_{\mathrm{new}}} &= \eta.
\end{align}
The resulting update is
\[
v_{\mathrm{FNG}}
=
\eta
\frac{F_{\mathrm{new}}^{-1} g}
{\sqrt{g^\top F_{\mathrm{new}}^{-1} g}}
\]
\end{theorem}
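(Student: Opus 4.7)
The plan is to substitute the ansatz $v_{\mathrm{FNG}} = c F_{\mathrm{new}}^{-1} g$ directly into the trust-region equation and solve for the single unknown scalar $c$; once $c$ is pinned down, plugging it back into the ansatz yields the closed form. No Lagrangian machinery is needed, because the feasible set is genuinely one-dimensional in $c$, and the objective from the FOPNG derivation is absent here by assumption.

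First I would compute the squared Fisher norm of the ansatz by unfolding the definition $\|u\|_{F_{\mathrm{new}}}^2 = u^\top F_{\mathrm{new}} u$ at $u = c F_{\mathrm{new}}^{-1} g$. Using symmetry of $F_{\mathrm{new}}$ (and hence of $F_{\mathrm{new}}^{-1}$), the middle combination $F_{\mathrm{new}}^{-1} F_{\mathrm{new}} F_{\mathrm{new}}^{-1}$ collapses to $F_{\mathrm{new}}^{-1}$, giving $\|v_{\mathrm{FNG}}\|_{F_{\mathrm{new}}}^2 = c^2 \, g^\top F_{\mathrm{new}}^{-1} g$.

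Next I would impose the trust-region constraint $\|v_{\mathrm{FNG}}\|_{F_{\mathrm{new}}} = \eta$ (matching the trust region in \Cref{eq:trust_region} used for FOPNG, and the regime in which $v_{\mathrm{FNG}}$ has Fisher-length exactly $\eta$). This yields $c \sqrt{g^\top F_{\mathrm{new}}^{-1} g} = \eta$. Since Assumption~\ref{ass:regularity}(2) makes $F_{\mathrm{new}}$ positive definite, so is $F_{\mathrm{new}}^{-1}$, and therefore $g^\top F_{\mathrm{new}}^{-1} g > 0$ whenever $g \neq 0$; selecting the positive root corresponding to a descent step gives $c = \eta / \sqrt{g^\top F_{\mathrm{new}}^{-1} g}$. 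Substituting this scalar back into $v_{\mathrm{FNG}} = c F_{\mathrm{new}}^{-1} g$ produces the stated closed form.

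Honestly, there is no serious obstacle here: the argument is a two-line algebraic normalization of the natural-gradient direction $F_{\mathrm{new}}^{-1} g$ to have Fisher length $\eta$, and it can be viewed as the degenerate $G = \emptyset$ specialization of \Cref{thm:projected_update_main} (where $P = I$). The only subtlety worth flagging is the positivity of $g^\top F_{\mathrm{new}}^{-1} g$, which rests on Assumption~\ref{ass:regularity}(2); the trivial edge case $g = 0$ corresponds to no update and can be handled by convention.
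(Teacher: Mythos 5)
Your proposal is correct and follows essentially the same route as the paper's own proof: substitute the ansatz into the trust-region norm, collapse $F_{\mathrm{new}}^{-1}F_{\mathrm{new}}F_{\mathrm{new}}^{-1}$, solve for the scalar $c$, and substitute back. The extra remarks about positivity of $g^\top F_{\mathrm{new}}^{-1} g$ under Assumption~\ref{ass:regularity}(2), the $g=0$ edge case, and the observation that the stated constraint $c\|v_{\mathrm{FNG}}\|_{F_{\mathrm{new}}}=\eta$ should be read as the trust region $\|v_{\mathrm{FNG}}\|_{F_{\mathrm{new}}}=\eta$ from Equation~\eqref{eq:trust_region} are all sound and align with what the paper implicitly assumes.
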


\begin{proof}
See Appendix~\ref{sec:proof_fng}.
\end{proof}

\subsection{FOPNG-PreFisher: Another Variant}

Instead of maintaining a single aggregate Fisher matrix $F_{\text{old}}$,
we may store task-specific Fisher-weighted gradients.
For each previous task $i$, define $\tilde{g}_i = F_i g_i$ and collect them
as
\[
\tilde{G} = [F_1 G_1, \ldots, F_{k-1} G_{k-1}].
\]

The optimization problem becomes
\begin{align}
\min_{u \in \mathbb{R}^p} \quad
& \| g - F_{\text{new}}^{-1} u \|_{F_{\text{new}}}^2 \\
s.t. \quad & u \in \operatorname{Col}(\tilde{G})
\end{align}

The solution has the same form as \eqref{eq:fopng}, with
\begin{equation}
P = I - \tilde{G} (\tilde{G}^\top F_\text{new}^{-1} \tilde{G})^{-1} \tilde{G}^\top
\label{fopng_prefisher_P}
\end{equation}

This variant removes $F_{\text{old}}$ from the per-iteration update and
reduces the number of Fisher matrix--vector products, making it particularly
attractive for long task sequences. We call this variation the \textbf{FOPNG-PreFisher} update.

\section{Practical Improvements and Hyperparameters}

\subsection{Algorithm Description}

Algorithms \ref{alg:fopng} and \ref{alg:fopng_prefisher} present the complete FOPNG and FOPNG-PreFisher procedures respectively. The algorithm maintains a collection $G$ of gradients from previous tasks. For each new task, it computes both the new-task Fisher matrix $F_{\text{new}}$ and the old-task Fisher matrix $F_{\text{old}}$. The projection ensures Fisher-orthogonality while the normalization ensures bounded Fisher-norm steps.

FOPNG introduces two key hyperparameters:
\begin{itemize}
    \item $\lambda$ is a regularization parameter needed for numerical stability when inverting potentially ill-conditioned matrices
    \item $\alpha$ determines how much weight to give to the new Fisher as $F_\text{old}$ is maintained as a moving exponential average
\end{itemize}

We present ablations on these parameters in Appendix \ref{app:ablations}; in general we find that performance across these hyperparameters is relatively stable and do not require significant tuning in practice. 

\begin{algorithm}[htbp]
\caption{FOPNG Algorithm}
\label{alg:fopng}
\small
\setlength{\abovedisplayskip}{2pt}
\setlength{\belowdisplayskip}{2pt}
\begin{algorithmic}[1]
\REQUIRE Initial parameters $\theta_0$, task sequence $\{D_1,\ldots,D_T\}$, learning rate $\eta$, regularization $\lambda$, new Fisher weighting $\alpha$, number of stored gradients per task $k$, number of epochs $E$
\STATE Initialize gradient memory $G \leftarrow []$. Train on task $D_1$ using Adam or SGD
\STATE Compute Fisher $F_{old} \leftarrow \widehat{F}(D_1)$; Store $G \leftarrow [g_1,...,g_k]$
\FOR{task $t = 2, 3, \ldots, T$}
    \FOR{epoch $e = 1, \ldots, E$}
        \STATE Recompute new-task Fisher: $F_{\text{new}} \leftarrow \widehat{F}(D_t)$
        \FOR{all batches $(x, y)$ in $D_t$}
            \STATE Compute gradient $g \leftarrow \nabla_\theta L_t(\theta)$
            \STATE Compute $v$ according to Equation \ref{eq:fopng}
            \STATE Update parameters: $\theta \leftarrow \theta - \eta v^*$
        \ENDFOR
    \ENDFOR
    \STATE Store final gradients: $G \leftarrow [G, g_{k(t-1)+1},...,g_{kt}]$
    \STATE Update old Fisher: $F_{\text{old}} \leftarrow (1 - \alpha)\, F_{\text{old}} + \alpha\, F_{\text{new}}$
\ENDFOR
\end{algorithmic}
\end{algorithm}
\begin{algorithm}[htbp]
\caption{FOPNG-PreFisher Algorithm}
\label{alg:fopng_prefisher}
\small
\setlength{\abovedisplayskip}{2pt}
\setlength{\belowdisplayskip}{2pt}
\begin{algorithmic}[1]
\REQUIRE Initial parameters $\theta_0$, task sequence $\{D_1,\ldots,D_T\}$, learning rate $\eta$, regularization $\lambda$, number of stored gradients per task $k$, number of epochs $E$
\STATE Initialize $\tilde{G} \leftarrow []$; Train on task $D_1$ using Adam or SGD
\STATE Compute Fisher $F_1 \leftarrow \widehat{F}(D_1)$; Store $\tilde{g}_i \leftarrow F_1 g_i$, $\tilde{G} \leftarrow [\tilde{g}_1,...,\tilde{g}_k]$
\FOR{task $t = 2, 3, \ldots, T$}
    \FOR{epoch $e = 1, \ldots, E$}
        \STATE Recompute new-task Fisher: $F_{\text{new}} \leftarrow \widehat{F}(D_t)$
        \FOR{all batches $(x, y)$ in $D_t$}
            \STATE Compute gradient $g \leftarrow \nabla_\theta L_t(\theta)$
            \STATE Compute $v$ according to Equations \ref{eq:fopng}, \ref{fopng_prefisher_P}
            \STATE Update parameters: $\theta \leftarrow \theta - \eta v^*$
        \ENDFOR
    \ENDFOR
    \STATE Compute Fisher $F_t \leftarrow \widehat{F}(D_t)$; Store $\tilde{g}_i \leftarrow F_t g_i$; Update $\tilde{G} \leftarrow [\tilde{G}, \tilde{g}_{k(t-1)+1},...,\tilde{g}_{kt}]$
\ENDFOR
\end{algorithmic}
\end{algorithm}

\subsection{Regularization}
\label{sec:regularization}
In practice, the Fisher information and gradient matrices may not be well-conditioned; consequently, the updates can become numerically unstable when taking the matrix pseudo-inverse. In practice, we find that a regularization parameter $\lambda$ is need to stabilize the matrix inversion when $G^\top F_{\text{old}} F_{\text{new}}^{-1} F_{\text{old}} G$ is ill-conditioned. Note that we apply the regularizer twice, once to invert $(F_{\text{new}}$ and once to invert $A$ (see Algorithm \ref{alg:fopng}). Although this causes the Fisher information to deviate further from its true value, our results show that FOPNG is still effective at reducing forgetting.

\subsection{Gradient Storage}

FOPNG requires storing gradients from previous tasks to enforce Fisher-orthogonality constraints. We maintain a buffer that stores a set number of gradients per task. Specifically, after completing training on task $t$, we compute and store the gradient $g_t = \nabla_\theta L_t(\theta_t^*)$ evaluated at the final parameters $\theta_t^*$ for that task. Following the methodology of \citet{farajtabar2020orthogonal}, we use the gradient coming from the single logit corresponding to the ground-truth label. For a data point $x$ coming from the $k$-th class ($y_k = 1$), we aim to keep only $\nabla_\theta L_{t,k}(\theta_t^*)$ invariant. 

In practice, we found that storing 80 gradients per task was sufficient and use this for our experiments. \citet{farajtabar2020orthogonal} show that larger buffer sizes lead to small gains in accuracy, which is also consistent with our results. 

\subsection{Fisher Information Estimation}

Computing the exact Fisher information matrix $F(\theta) \in \mathbb{R}^{p \times p}$ is intractable for modern neural networks with millions of parameters. We employ the diagonal Fisher approximation reducing storage from $O(p^2)$ to $O(p).$ This is computed as
\begin{equation}
\widehat{F}_{\text{diag}}(\theta) = \frac{1}{|\mathcal{D}|} \sum_{(x,y) \in \mathcal{D}} \left[\nabla_\theta \log p_\theta(y|x)\right]^2,
\end{equation}
where the square is applied element-wise.

For FOPNG-PreFisher, an alternative is to use the empirical Fisher when multiplying the task gradients by $F_i$ after training on task $i$; since this is a one-time cost it is still computationally feasible. 

Empirically, we found no benefit from using the empirical Fisher for FOPNG-PreFisher thus we use the diagonal Fisher for all experiments unless specified otherwise.

For most machine learning tasks, the dataset may be too large to compute the entire Fisher matrix over. In practice, we only use a fixed-size random batch of the data every time the Fisher information is computed. Section \ref{sec:ablations} provides ablations showing the effect of varying this batch size.

\section{Experimental Results}

\subsection{Experimental Design}

\paragraph{Datasets.}
We evaluate on standard continual learning benchmarks based on the MNIST \cite{mnist} and CIFAR \cite{cifar} datasets chosen to test different aspects of the method:
\begin{itemize}
\item \textbf{Permuted-MNIST} \cite{kirkpatrick2017overcoming} (5 tasks): Each task applies a fixed random permutation to the images.
\item \textbf{Split-MNIST} \cite{splitmnist} (5 tasks): Each task contains 2 digit classes. We train on digits 0, 1 first, then 2, 3, and so on with the last task being digits 8, 9. Note that this is \textit{not} the multi-head form seen in some work, but rather uses a standard 10-output prediction head.
\item \textbf{Rotated-MNIST} (5 tasks): Each task rotates the images by a fixed rotation (0, 10, 20, 30, 40 degrees for tasks 1 through 5 respectively).
\item \textbf{Split-CIFAR10} \cite{synapticintel} (5 tasks): Each task contains 2 image classes. We train on classes 1, 2 first, then 3, 4, and so on. Note that this does use a multi-head architecture similar to prior work.
\item \textbf{Split-CIFAR100} \cite{synapticintel} (10 tasks): Each task contains 10 image classes out of 100 total. We train on classes 1-10 first, then 11-20, and so on. This also uses a multi-head architecture.
\end{itemize}

\paragraph{Model Architectures and Training Setup.}
We implement FOPNG in PyTorch. For MNIST-based benchmarks we use a simple MLP with 2 hidden layers of 100 units each, following \citet{farajtabar2020orthogonal}. Every layer except the final one uses ReLU activations and the loss is softmax cross-entropy. For CIFAR-based benchmarks we use a CNN with 4 convolutional layers (two blocks of Conv-ReLU-Conv-ReLU-MaxPool, 3→32→64 channels) followed by 2 fully-connected layers (4096→256→256) with ReLU and dropout (0.5) similar to that used by \citet{gupta2018shampoo}. Each task has a separate output head (256→2). All layers except the output use ReLU activations and the loss is softmax cross-entropy. Across all benchmarks we train each task for a fixed number of epochs with the same batch size of 10 to enable direct comparisons across methods.

\paragraph{First Task Training.} To establish a consistent, strong baseline on the first task (for which there is no previous task to "forget"), we train using standard optimizers without continual learning constraints. For MNIST datasets, we use SGD on the first task with the same learning rate as the optimizer; due to the simplicity of MNIST this is able to learn the first task with almost perfect accuracy. All methods then switch to their respective continual learning optimizers starting from task 2. For Split-CIFAR10, all optimizers use SGD with a learning rate of $1e-3$ on the first task to establish a consistent baseline on this more challenging dataset. For Split-CIFAR100, all optimizers use SGD with a learning rate of $1e-2$ on the first task.

\paragraph{Hyperparameter Selection} For all optimizers we conducing a learning rate sweep from $1e-5$ to $1e-1$. For EWC, FOPNG, and FOPNG-PreFisher, we also swept their respective $\lambda$ hyper-parameters; hyper-parameters were chosen to maximize final average accuracy across all tasks on a validation set. A more detailed description of what hyper-parameters were used for each experiment is given in Appendix \ref{app:hyperparameters} for reproducibility.

\paragraph{Evaluation} Following the methodology of \citet{chaudhry2018efficient}, we use the average accuracy on all trained tasks as our evaluation metric. In particular, the average accuracy after training on task $k$ is
\[A_k = \frac{1}{k}\sum_{i=1}^ka_{i,k}\]
where $a_{i,k}$ is the test-set accuracy on task $i$ after training on task $k$.

\subsection{Experimental Results}
\begin{figure}[htbp]
\setlength{\abovecaptionskip}{2pt}
\setlength{\belowcaptionskip}{0pt}
    \centering
    % -------- First row --------
    \begin{subfigure}{0.41\textwidth}
        \centering
        \includegraphics[width=\linewidth]{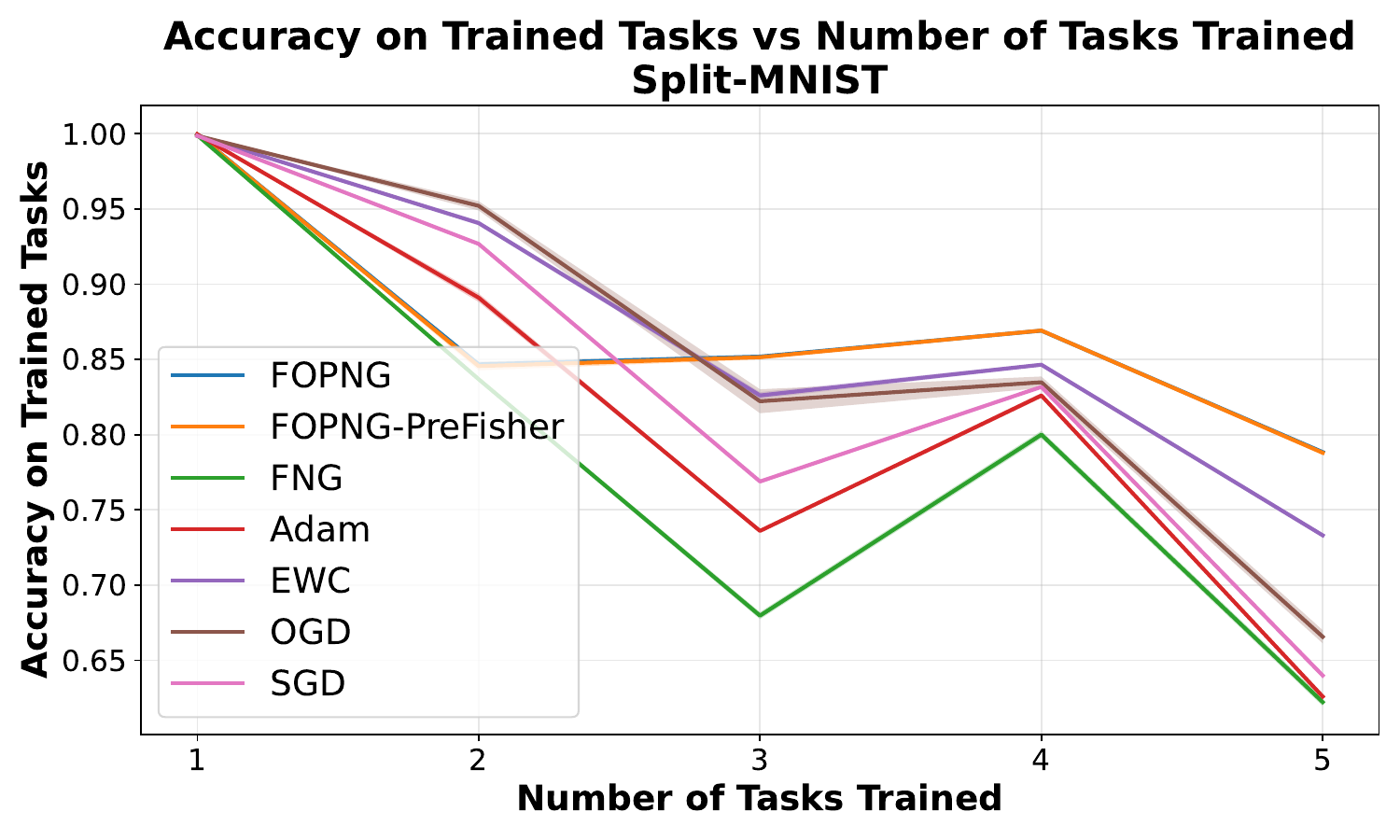}
        \label{fig:split_mnist_compare}
    \end{subfigure}\\[-1.5em]
    \begin{subfigure}{0.41\textwidth}
        \centering
        \includegraphics[width=\linewidth]{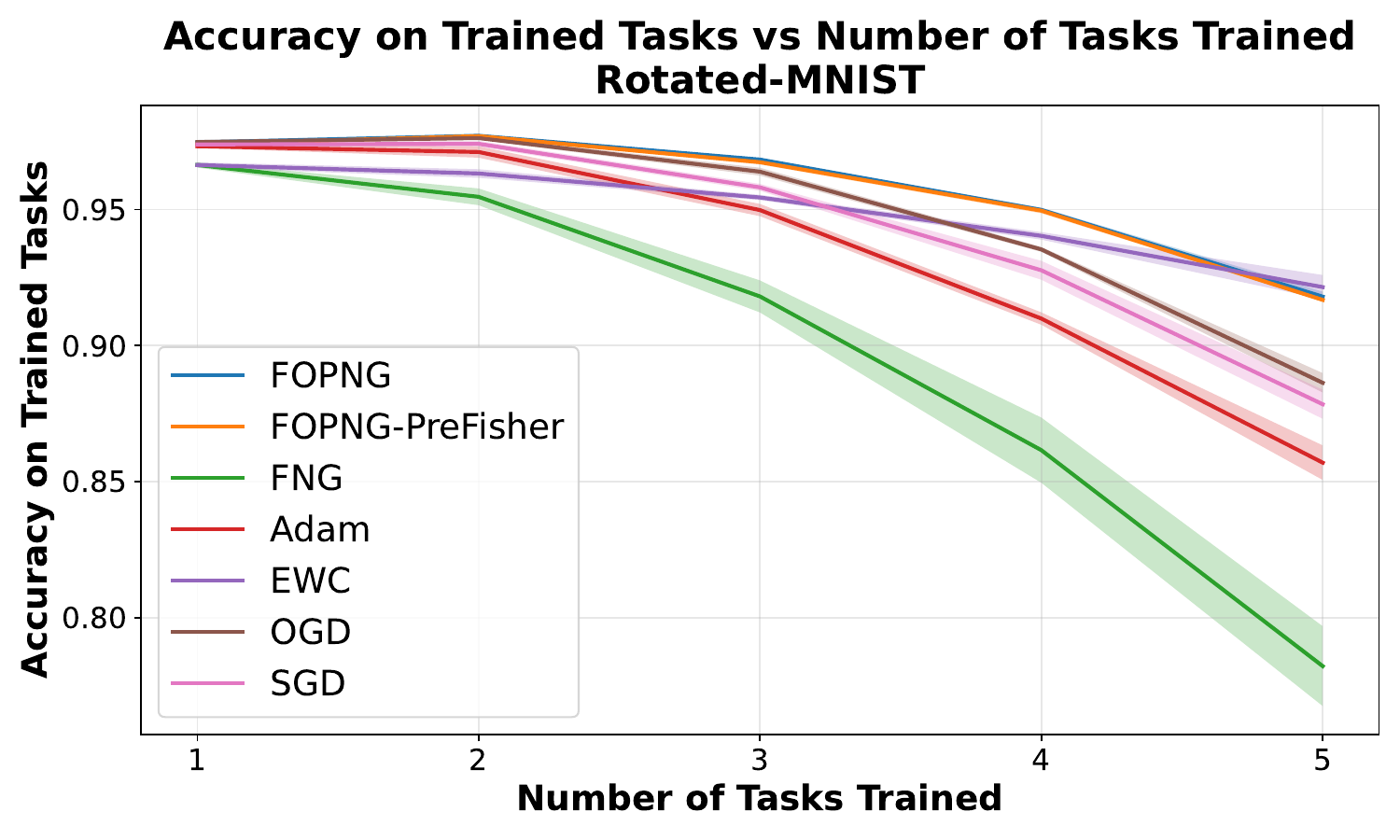}
        \label{fig:rotated_mnist_compare}
    \end{subfigure}\\[-1.5em]
    \begin{subfigure}{0.41\textwidth}
        \centering
        \includegraphics[width=\linewidth]{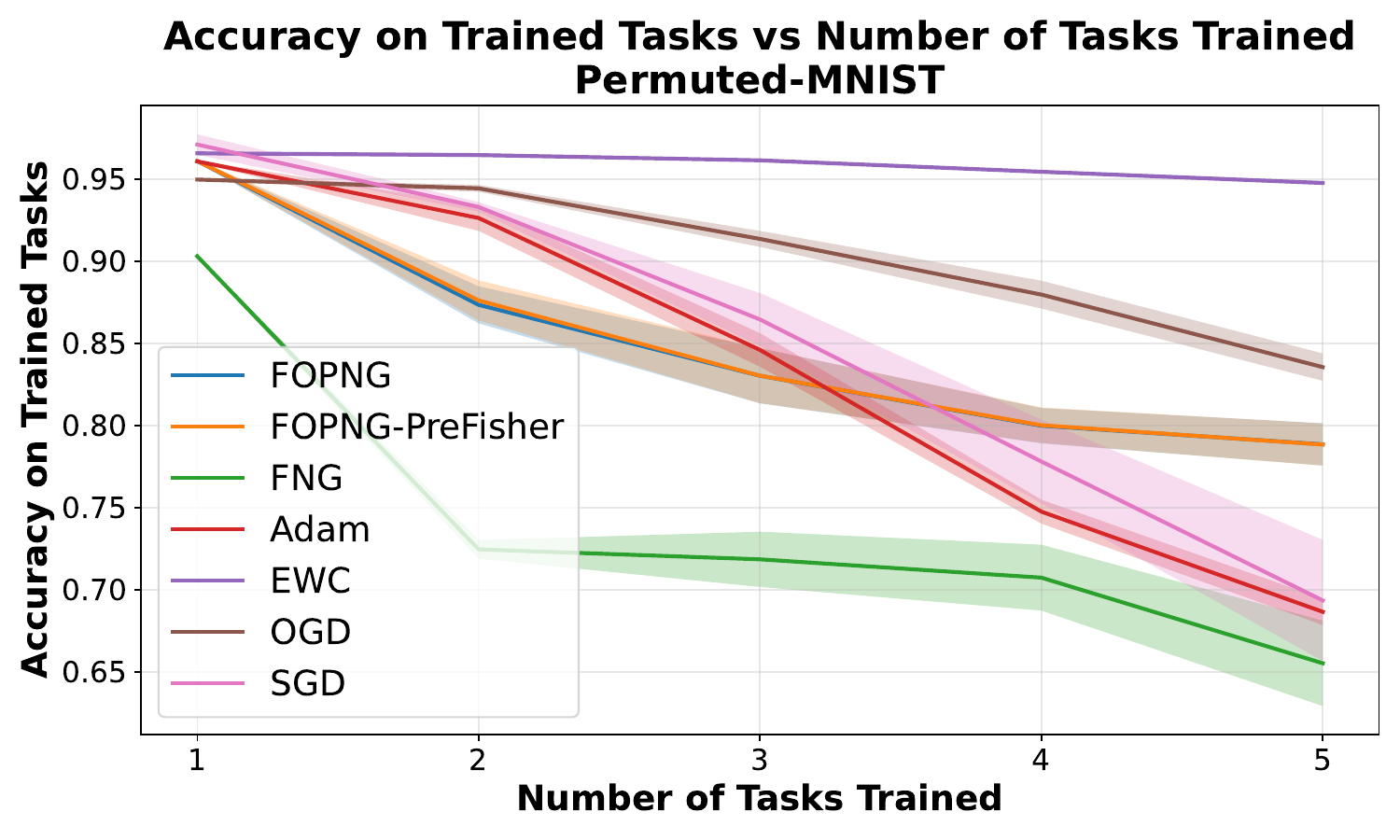}
        \label{fig:permuted_mnist_compare}
    \end{subfigure}\\[-1.5em]
    \begin{subfigure}{0.41\textwidth}
        \centering
        \includegraphics[width=\linewidth]{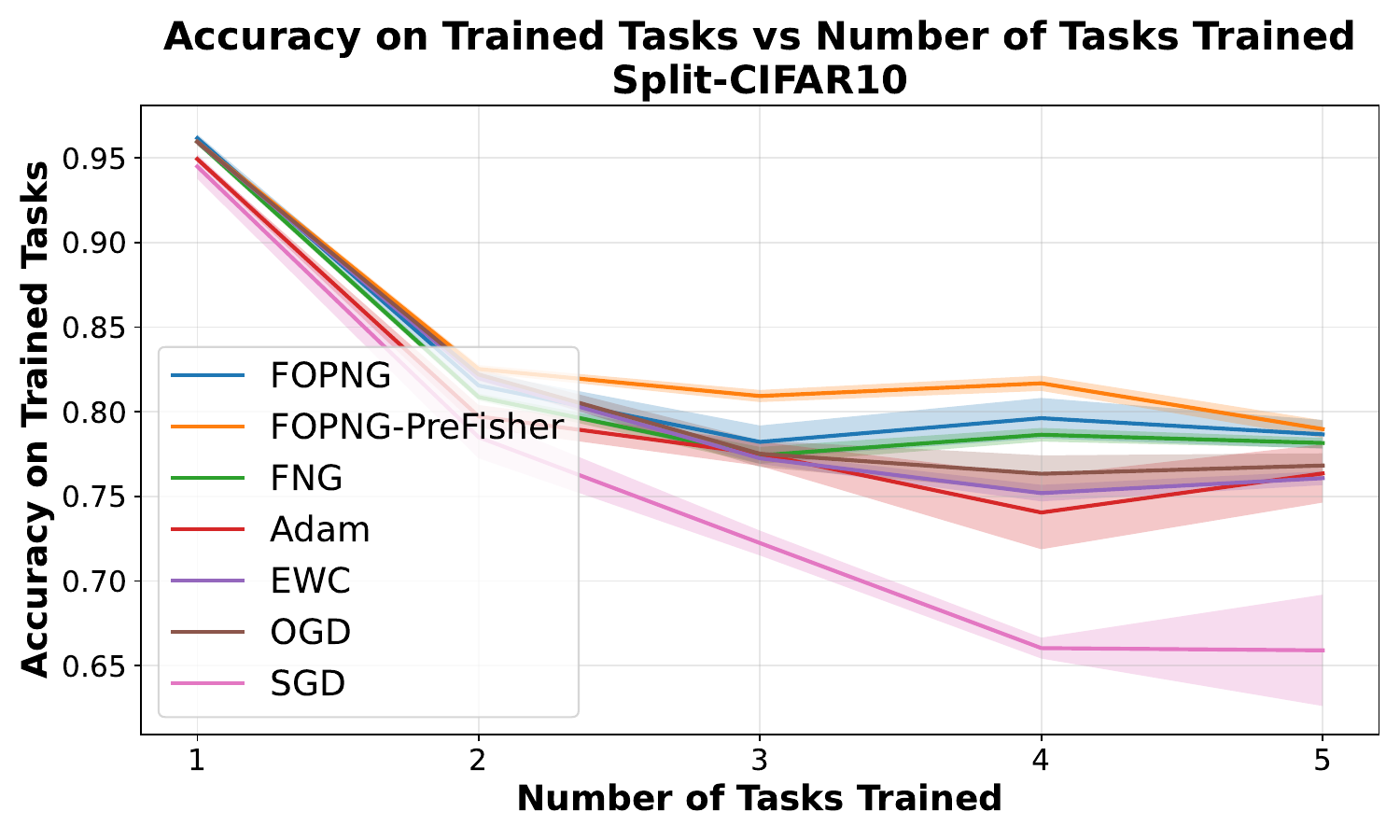}
        \label{fig:split_cifar10_compare}
    \end{subfigure}\\[-1.5em]
    \begin{subfigure}{0.41\textwidth}
        \centering
        \includegraphics[width=\linewidth]{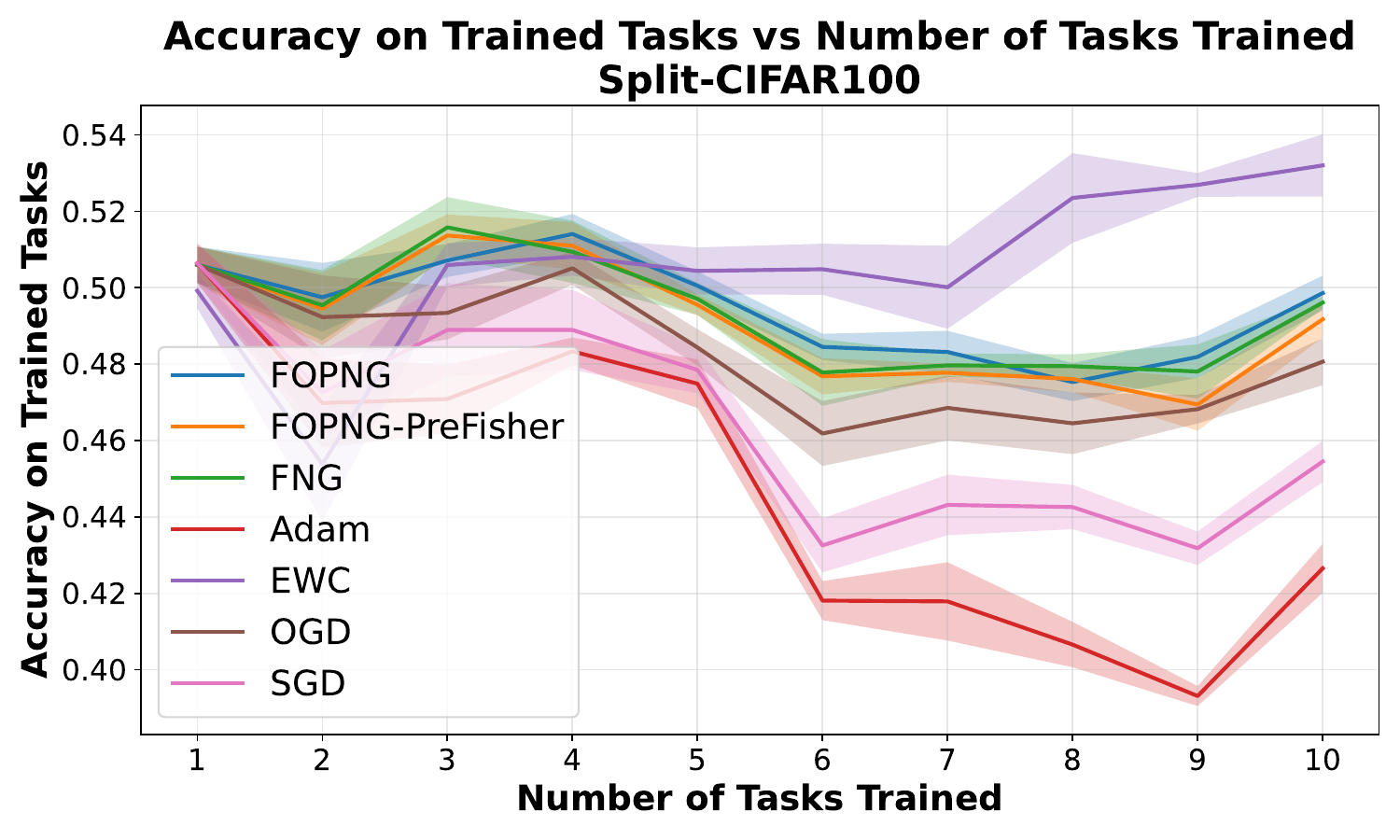}
        \label{fig:split_cifar100_compare}
    \end{subfigure}
    \caption{Comparisons across benchmarks. Shaded regions demonstrate 68\% confidence intervals across 5 independent training runs.}
    \label{fig:mnist_all}
\end{figure}

Figure \ref{fig:mnist_all} shows that FOPNG and FOPNG-PreFisher outperform all other optimizers on Rotated-MNIST, Split-MNIST, and Split-CIFAR10; with the exception that EWC performs comparably on Rotated-MNIST. We note that FOPNG performs poorly on Permuted-MNIST, possibly due to the benchmark creating highly out-of-distribution tasks in sequence \cite{robust-cl}. This suggests that FOPNG performs especially well on practical continual learning tasks where consecutive tasks are likely to have some distributional similarity. Importantly, we see that FOPNG outperforms OGD on all benchmarks (except Permuted-MNIST) by a significant margin, demonstrating that projections in Fisher space can preserve more information than Euclidean projections. Interestingly, EWC does very well on Split-CIFAR100, however, FOPNG still manages to outperform all other optimizers.

Another interesting point to note is that FOPNG and FOPNG-PreFisher exhibit very similar performance on all tasks. We also see that FNG performs poorly on most benchmarks, demonstrating that the Fisher-orthogonal projections are necessary for reducing forgetting.
\subsection{Ablation Studies}
\label{sec:ablations}

We conduct ablation studies on FOPNG's key hyperparameters: the regularization parameter $\lambda$ and the new Fisher weighting $\alpha$ in Appendix \ref{app:ablations}.

 Figure \ref{fig:lambda_ablation} shows that FOPNG performance is largely stable across a wide range of $\lambda$ values, with $\lambda \in [10^{-4}, 10^{-3}]$ being consistently good for all tasks. Very small values can lead to numerical instability while very large values begin to artifically perturb the Fisher geometry significantly. Figure \ref{fig:alpha_ablation} shows that FOPNG is remarkably robust to the choice of $\alpha$; this parameter effectively does not require tuning so we recommend 0.5 as a safe value to use. This stability is one key advantage of our method; unlike EWC, performance is much less sensitive to hyperparameter choice in FOPNG.

We also study the effect of changing the batch size used to compute the Fisher in FOPNG. Figure \ref{fig:fisher_bs_ablation} shows that there is a marginal gain in using batch sizes over 16. Remarkably, we find that even using just one random sample to estimate the Fisher information yields reasonable performance on average, just with higher variance. \citet{ewcfisher} shows that EWC requires large batch sizes for Fisher estimation to maintain performance; in contrast, FOPNG is significantly more stable and performs well for extremely small batch sizes demonstrating another strength of our method.

\begin{figure}[htbp]
    \centering
    \begin{minipage}{0.48\textwidth}
        \centering
        \includegraphics[width=\textwidth]{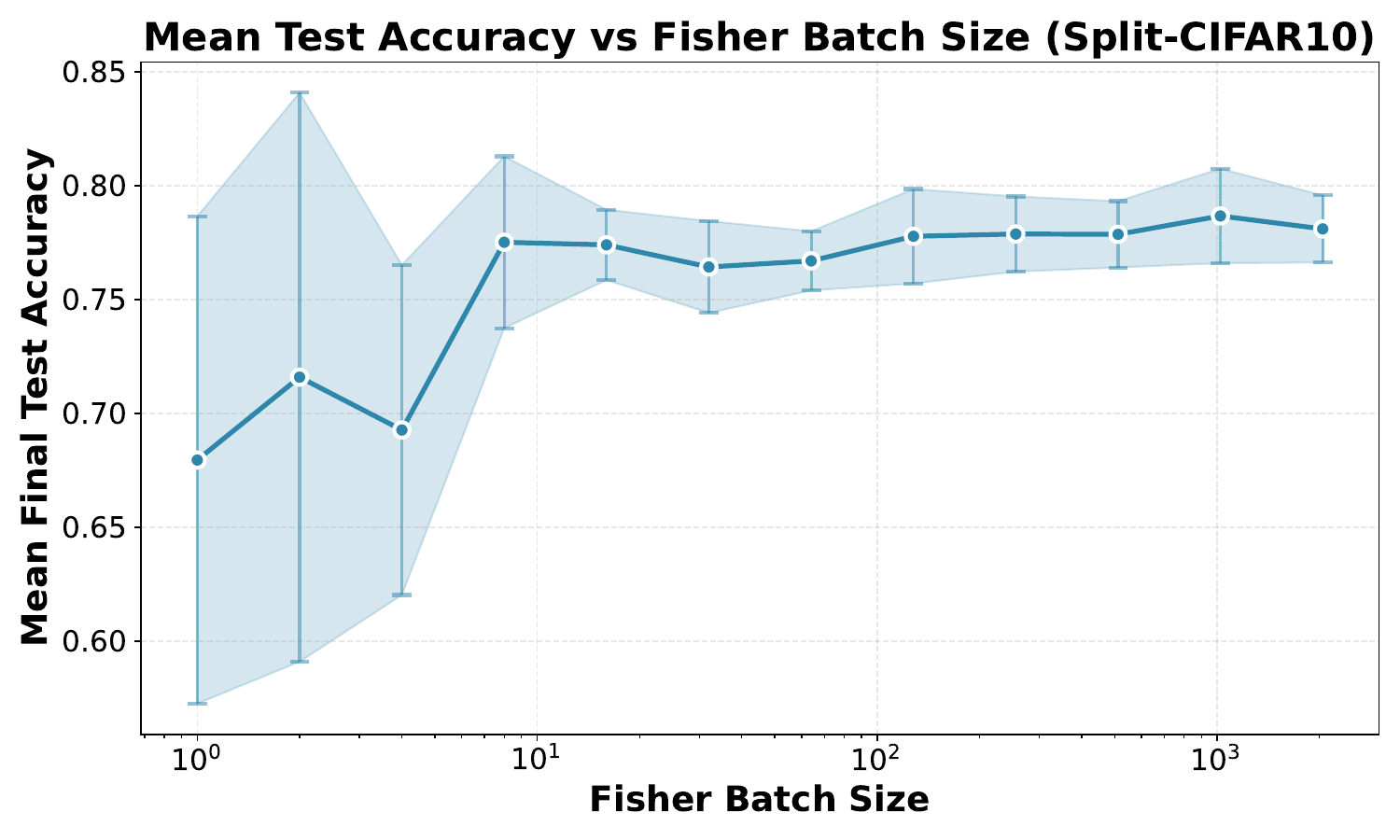}
    \end{minipage}
    \hfill
    \begin{minipage}{0.48\textwidth}
        \centering
        \includegraphics[width=\textwidth]{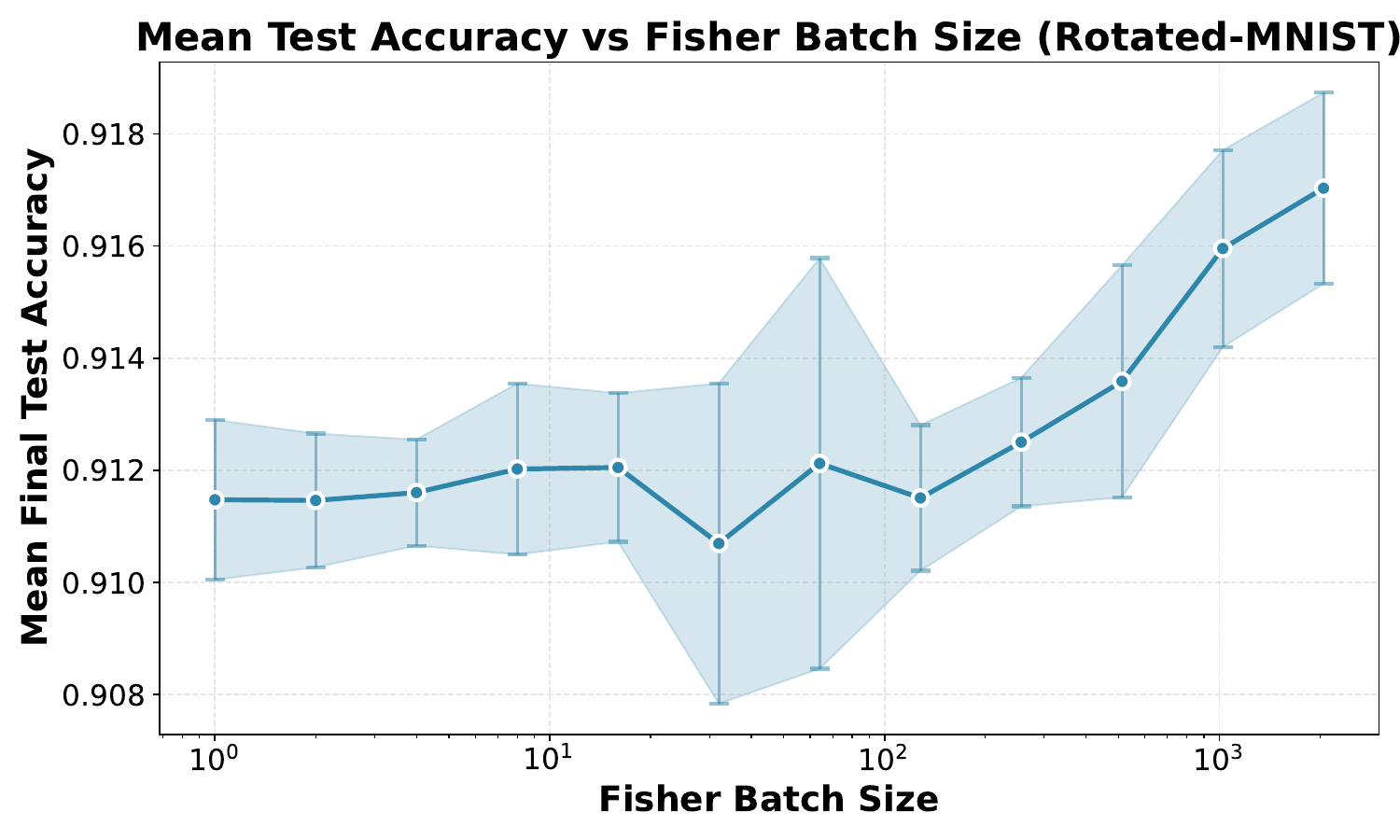}
    \end{minipage}
    \caption{Larger batch sizes increase performance and reduce variance. Shaded regions show 95\% confidence intervals across 5 independent runs.}
    \label{fig:fisher_bs_ablation}
\end{figure}

\subsection{Limitations}

We discuss the scope and limitations of FOPNG. Our current implementation assumes access to task-level gradients, which may limit applicability in settings with restricted memory or compute budgets. Additionally, one limitation of FOPNG is that training time scales with the number of continual tasks, making it less efficient for very long task sequences. In general, we find that on the datasets above, FOPNG requires around 40\% to 80\% more wall-clock time than EWC and OGD depending on the dataset. However, we believe that scalable implementations of FOPNG can significantly decrease this wall-clock time in a few ways:
\begin{itemize}
    \item Hyperparameter Stability: In general, FOPNG can achieve similar performance for a wider range of hyperparameters reducing the number of required hyperparameter sweeps.
    \item Software Optimizations: Our implementation is not optimized; FOPNG could potentially be implemented as a GPU kernel providing significant speedups.
    \item Framework Improvements: Estimating the Fisher requires computing per-sample gradients which is not efficiently supported by PyTorch (which we use); practical implementations can parallelize this much more efficiently.
\end{itemize}

Additionally, future work may improve on FOPNG to find ways to reduce the size of the gradient buffer or approximate it with a constant amount of space via low-rank approximations or other compression methods.

\section{Conclusion}

We have presented a geometrically principled optimizer for continual learning that enforces Fisher-orthogonality to previous task gradients while taking minimal Fisher-norm steps. The resulting direction is invariant under reparameterization, guarantees descent in the Fisher metric, and reduces changes in prior task outputs after training on new tasks. This approach unifies orthogonal and natural gradient methods within the Fisher-Riemannian geometry, providing a principled solution to the stability-plasticity dilemma in continual learning. We view our work as a mix of theoretical and empirical contributions that future work may build upon to create more salient deep learning optimizers. Notably, a key takeaway of our work is demonstrating that projections in Fisher-space are beneficial for reducing catastrophic forgetting.

\section{Impact Statement}

This paper presents work whose goal is to advance the field of machine learning. One potential impact of FOPNG is that it is specifically designed for scenarios where access to past training data is restricted. This is critical for applications where data security is paramount—such as healthcare or personalized edge computing—where models must learn from sensitive user data without needing to store or re-access it over time. We do not anticipate any significant negative societal impacts unique to this work, as it is a fundamental algorithmic improvement for training neural networks. However, like all advances in machine learning, its ultimate impact will depend on the specific applications for which these more capable continual learning systems are utilized.

\bibliography{main}
\bibliographystyle{icml2026}

\appendix
\section{Theoretical Derivations}
\label{sec:theory}

This appendix contains the complete proofs of all theorems presented in the main text.

\subsection{Proof of Theorem \ref{thm:reparam_invariance}: Reparameterization Invariance}
\label{sec:proof_reparam}

\begin{proof}
The Fisher information in the $\phi$-coordinates is
\begin{equation}
F_\phi(\phi) = \mathbb{E}\left[(\nabla_\phi \log p_\phi(y|x))(\nabla_\phi \log p_\phi(y|x))^\top\right].
\end{equation}
Under reparameterization, gradients transform as $\nabla_\phi = J^{-\top} \nabla_\theta$, so
\begin{align}
F_\phi &= \mathbb{E}\left[(J^{-\top} \nabla_\theta \log p_\theta)(J^{-\top} \nabla_\theta \log p_\theta)^\top\right] \nonumber\\
&= J^{-\top} F_\theta J^{-1}.
\end{align}
Now, for any tangent vector $\delta\theta$, letting $\delta\phi = J\delta\theta$,
\begin{align}
\|\delta\phi\|_{F_\phi}^2 &= \delta\phi^\top F_\phi \delta\phi \nonumber\\
&= (J \delta\theta)^\top (J^{-\top} F_\theta J^{-1})(J \delta\theta) \nonumber\\
&= \delta\theta^\top F_\theta \delta\theta \nonumber\\
&= \|\delta\theta\|_{F_\theta}^2.
\end{align}
\end{proof}

\subsection{Proof of Theorem \ref{thm:kl_approximation}: KL Divergence Approximation}
\label{sec:proof_kl}

\begin{proof}
Taylor expand the log-likelihood:
\begin{align}
\log p_{\theta+v}(y|x) &= \log p_\theta(y|x) + (\nabla_\theta \log p_\theta)^\top v \nonumber\\
&\quad + \frac{1}{2} v^\top H_\theta v + o(\|v\|^2),
\end{align}
where $H_\theta$ is the Hessian of $\log p_\theta$. The KL divergence is
\begin{align}
\text{KL}(p_\theta \| p_{\theta+v}) &= \mathbb{E}_{p_\theta}\left[\log p_\theta - \log p_{\theta+v}\right] \nonumber\\
&= -\mathbb{E}_{p_\theta}\left[(\nabla_\theta \log p_\theta)^\top v + \frac{1}{2} v^\top H_\theta v\right] \nonumber\\
&\quad + o(\|v\|^2).
\end{align}
Standard identities give $\mathbb{E}_{p_\theta}[\nabla_\theta \log p_\theta] = 0$ and $\mathbb{E}_{p_\theta}[-H_\theta] = F_\theta$, yielding
\begin{equation}
\text{KL}(p_\theta \| p_{\theta+v}) = \frac{1}{2} v^\top F_\theta v + o(\|v\|^2).
\end{equation}
\end{proof}

\subsection{Proof of Theorem \ref{thm:projected_update_main}: FOPNG Update}
\label{sec:proof_fopng}

\begin{proof}
We solve the constrained optimization problem
\begin{align}
\min_{u \in \mathbb{R}^p} \quad & \| g - F_{\text{new}}^{-1} u \|_{F_{\text{new}}}^2 \\
\text{s.t.} \quad & F_{\text{old}}^{-1} u \in \operatorname{Col}(G).
\end{align}

Expand the objective:
\begin{align}
\| g - F_{\text{new}}^{-1} u \|_{F_{\text{new}}}^2 &= (g - F_{\text{new}}^{-1}u)^\top F_{\text{new}} (g - F_{\text{new}}^{-1}u) \nonumber\\
&= g^\top F_{\text{new}} g - 2g^\top u + u^\top F_{\text{new}}^{-1} u.
\end{align}

The constraint $F_{\text{old}}^{-1} u \in \operatorname{Col}(G)$ means there exists $\alpha \in \mathbb{R}^m$ such that
\begin{equation}
F_{\text{old}}^{-1} u = G\alpha \quad \Longleftrightarrow \quad u = F_{\text{old}} G \alpha.
\end{equation}

Substitute into the objective and minimize over $\alpha$:
\begin{multline}
\min_{\alpha} \quad g^\top F_{\text{new}} g - 2g^\top F_{\text{old}} G \alpha \\
+ \alpha^\top G^\top F_{\text{old}} F_{\text{new}}^{-1} F_{\text{old}} G \alpha.
\end{multline}

Taking the derivative with respect to $\alpha$ and setting to zero:
\begin{align}
-2 G^\top F_{\text{old}} g + 2 G^\top F_{\text{old}} F_{\text{new}}^{-1} F_{\text{old}} G \alpha &= 0 \nonumber\\
\implies \alpha^* = (G^\top F_{\text{old}} F_{\text{new}}^{-1} F_{\text{old}} G)^{-1} G^\top F_{\text{old}} g.
\end{align}

The optimal $u^*$ is therefore
\begin{equation}
u^* = F_{\text{old}} G (G^\top F_{\text{old}} F_{\text{new}}^{-1} F_{\text{old}} G)^{-1} G^\top F_{\text{old}} g.
\end{equation}

Define the projection matrix
\begin{equation}
P = I - F_{\text{old}} G (G^\top F_{\text{old}} F_{\text{new}}^{-1} F_{\text{old}} G)^{-1} G^\top F_{\text{old}}.
\end{equation}

Then $g - u^* = P g$, and the unnormalized update is
\begin{equation}
\tilde{v} = c F_{\text{new}}^{-1}(g - u^*) = c F_{\text{new}}^{-1} P g.
\end{equation}

To enforce the trust region constraint $v^\top F_{\text{new}} v = \eta^2$, we compute
\begin{align}
\tilde{v}^\top F_{\text{new}} \tilde{v} &= c^2 (Pg)^\top F_{\text{new}}^{-1} F_{\text{new}} F_{\text{new}}^{-1} (Pg) \nonumber\\
&= c^2 g^\top P^\top F_{\text{new}}^{-1} P g.
\end{align}

Setting this equal to $\eta^2$ gives
\begin{equation}
c = \frac{\eta}{\sqrt{g^\top P^\top F_{\text{new}}^{-1} P g}}.
\end{equation}

The final update is
\begin{equation}
v^* = \eta \frac{F_{\text{new}}^{-1} P g}{\sqrt{g^\top P^\top F_{\text{new}}^{-1} P g}}.
\end{equation}
\end{proof}

\subsection{Proof of Theorem \ref{thm:fng}: Fisher Natural Gradient}
\label{sec:proof_fng}

\begin{proof}
We parameterize the update as $v = c F_{\mathrm{new}}^{-1} g$ and seek the value of $c$ satisfying the trust region constraint $v^\top F_{\mathrm{new}} v = \eta^2$.

Substituting the parameterization:
\begin{align}
v^\top F_{\mathrm{new}} v &= (c F_{\mathrm{new}}^{-1} g)^\top F_{\mathrm{new}} (c F_{\mathrm{new}}^{-1} g) \nonumber\\
&= c^2 g^\top F_{\mathrm{new}}^{-1} g.
\end{align}

Setting this equal to $\eta^2$:
\begin{equation}
c^2 g^\top F_{\mathrm{new}}^{-1} g = \eta^2.
\end{equation}

Solving for $c$:
\begin{equation}
c = \frac{\eta}{\sqrt{g^\top F_{\mathrm{new}}^{-1} g}}.
\end{equation}

Substituting back into the parameterization gives the FNG update:
\begin{equation}
v_{\mathrm{FNG}} = \eta \, \frac{F_{\mathrm{new}}^{-1} g}{\sqrt{g^\top F_{\mathrm{new}}^{-1} g}}.
\end{equation}
\end{proof}

\section{Hyperparameter Values}
\label{app:hyperparameters}
We list the hyperparameters used in our experiments for reproducibility in Table \ref{tab:hparams}. We performed a comprehensive sweep over all hyperparameters. For each task, we selected the hyperparameter configuration that yielded the highest final task accuracy on a validation set; however, for MNIST-based tasks we filtered out low learning rate runs that may yield high final task accuracy by not learning anything at all (these are also high variance and not representative of any meaningful learning). Hence, we only considered runs that achieved at least 90\% accuracy on the first task when trained on it.

For the learning rate, we swept over [1e-5, 5e-5, 1e-4, 5e-4, 1e-3, 5e-3, 1e-2, 5e-2, 1e-1]. For EWC, the $\lambda$ regularization coefficient was swept over [10, 50, 100, 400]. For FOPNG, FNG, FOPNG-PreFisher, the $\lambda$ regularization coefficient was swept over [1e-4, 5e-4, 1e-3, 1e-2] \textit{after} choosing the optimal learning rate while using $\lambda=$1e-3; it is likely that a more thorough sweep over FOPNG's hyper-parameters would yield slightly higher performance. Section \ref{app:ablations} shows that performance is extremely stable for different FOPNG $\alpha$ values, so we fix $\alpha = 0.5$ for all experiments.

% For top of page, use [t]. For bottom of page, use [b]
\renewcommand{\arraystretch}{0.74}
\begin{table*}[htbp]  % Change to [b] for bottom
\centering
\caption{Hyperparameters used for each method across different tasks.}
\label{tab:hparams}
\begin{tabular}{lccccccc}
\toprule
\textbf{Hyperparameter} 
& \textbf{Adam} 
& \textbf{SGD} 
& \textbf{EWC} 
& \textbf{FNG} 
& \textbf{OGD} 
& \textbf{FOPNG} 
& \textbf{FOPNG-PreFisher} \\
\midrule
\multicolumn{8}{l}{\textit{Permuted-MNIST}} \\
\midrule
Learning rate ($\eta$)  & 1e-4 & 5e-3 & 1e-2 & 1e-3 & 5e-3 & 1e-4 & 1e-4 \\
Grads per task          & -- & -- & -- & -- & 80 & 80 & 80 \\
Batch size              & 10 & 10 & 10 & 10 & 10 & 10 & 10 \\
$\lambda$ & -- & -- & 10 & 1e-3 & -- & 1e-2 & 1e-3 \\
$\alpha$ & -- & -- & -- & -- & -- & 0.5 & -- \\
Max directions & -- & -- & -- & -- & 400 & 400 & 400 \\
Fisher batch size & -- & -- & full & full & -- & full & full \\
\# epochs/task & 5 & 5 & 5 & 5 & 5 & 5 & 5 \\
\midrule
\multicolumn{8}{l}{\textit{Split-MNIST}} \\
\midrule
Learning rate ($\eta$)  & 1e-5 & 5e-4 & 5e-4 & 1e-3 & 5e-4 & 1e-5 & 1e-5 \\
Grads per task          & -- & -- & -- & -- & 80 & 80 & 80 \\
Batch size              & 10 & 10 & 10 & 10 & 10 & 10 & 10 \\
$\lambda$ & -- & -- & 400 & 1e-3 & -- & 5e-4 & 5e-4 \\
$\alpha$ & -- & -- & -- & -- & -- & 0.5 & -- \\
Max directions & -- & -- & -- & -- & 400 & 400 & 400 \\
Fisher batch size & -- & -- & full & full & -- & full & full \\
\# epochs/task & 5 & 5 & 5 & 5 & 5 & 5 & 5 \\
\midrule
\multicolumn{8}{l}{\textit{Rotated-MNIST}} \\
\midrule
Learning rate ($\eta$)  & 1e-4 & 1e-1 & 5e-4 & 5e-4 & 5e-4 & 5e-4 & 1e-3 \\
Grads per task          & -- & -- & -- & -- & 80 & 80 & 80 \\
Batch size              & 10 & 10 & 10 & 10 & 10 & 10 & 10 \\
$\lambda$ & -- & -- & 10 & 1e-3 & -- & 1e-2 & 1e-2 \\
$\alpha$ & -- & -- & -- & -- & -- & 0.5 & -- \\
Max directions & -- & -- & -- & -- & 400 & 400 & 400 \\
Fisher batch size & -- & -- & full & full & -- & full & full \\
\# epochs/task & 5 & 5 & 5 & 5 & 5 & 5 & 5 \\
\midrule
\multicolumn{8}{l}{\textit{Split-CIFAR10}} \\
\midrule
Learning rate ($\eta$)  & 1e-3 & 5e-2 & 1e-2 & 1e-2 & 5e-2 & 1e-3 & 1e-3 \\
Grads per task          & -- & -- & -- & -- & 80 & 80 & 80 \\
Batch size              & 10 & 10 & 10 & 10 & 10 & 10 & 10 \\
$\lambda$ & -- & -- & 50 & 1e-3 & -- & 1e-3 & 1e-3 \\
$\alpha$ & -- & -- & -- & -- & -- & 0.5 & -- \\
Max directions & -- & -- & -- & -- & 400 & 400 & 400 \\
Fisher batch size & -- & -- & 1024 & 1024 & -- & 1024 & 1024 \\
\# epochs/task & 5 & 5 & 5 & 5 & 5 & 5 & 5 \\
\midrule
\multicolumn{8}{l}{\textit{Split-CIFAR100}} \\
\midrule
Learning rate ($\eta$)  & 1e-4 & 5e-3 & 1e-2 & 5e-3 & 1e-2 & 5e-3 & 5e-3 \\
Grads per task          & -- & -- & -- & -- & 80 & 80 & 80 \\
Batch size              & 10 & 10 & 10 & 10 & 10 & 10 & 10 \\
$\lambda$ & -- & -- & 50 & 1e-3 & -- & 1e-3 & 1e-3 \\
$\alpha$ & -- & -- & -- & -- & -- & 0.5 & -- \\
Max directions & -- & -- & -- & -- & 800 & 800 & 800 \\
Fisher batch size & -- & -- & full & 1024 & -- & 1024 & 1024 \\
\# epochs/task & 10 & 10 & 10 & 10 & 10 & 10 & 10 \\
\bottomrule
\end{tabular}
\end{table*}

\section{Continued Ablation Studies}
\label{app:ablations}

We conduct ablation studies on FOPNG's two key hyperparameters: the regularization parameter $\lambda$ and the new Fisher weighting $\alpha$. We use the best hyperparameter settings from Table \ref{tab:hparams} for our runs.

\subsection{Regularization Parameter $\lambda$}

We conduct a sweep of $\lambda$ over [1e-5, 3e-5, 5e-5, 7e-5, 1e-4, 3e-4, 5e-4, 7e-4, 1e-3, 3e-3, 5e-3, 7e-3, 1e-2] and measure the final task accuracy for each task on Split-MNIST and Rotated-MNIST, keeping all other parameters fixed.

Figure~\ref{fig:lambda_ablation} shows that FOPNG performance is largely stable across a wide range of $\lambda$ values. In general we have found that $\lambda \in [10^{-4}, 10^{-3}]$ is consistently good for all tasks. Very small values can lead to numerical instability when the Fisher matrix is ill-conditioned, while very large values begin to perturb the Fisher geometry significantly, reducing the method's effectiveness. This stability is one key advantage of our method; unlike EWC, the additional hyperparameters introduced by FOPNG effectively require no extra tuning.

\begin{figure}[htbp]
    \centering
    \begin{minipage}{0.4\textwidth}
        \centering
        \includegraphics[width=\textwidth]{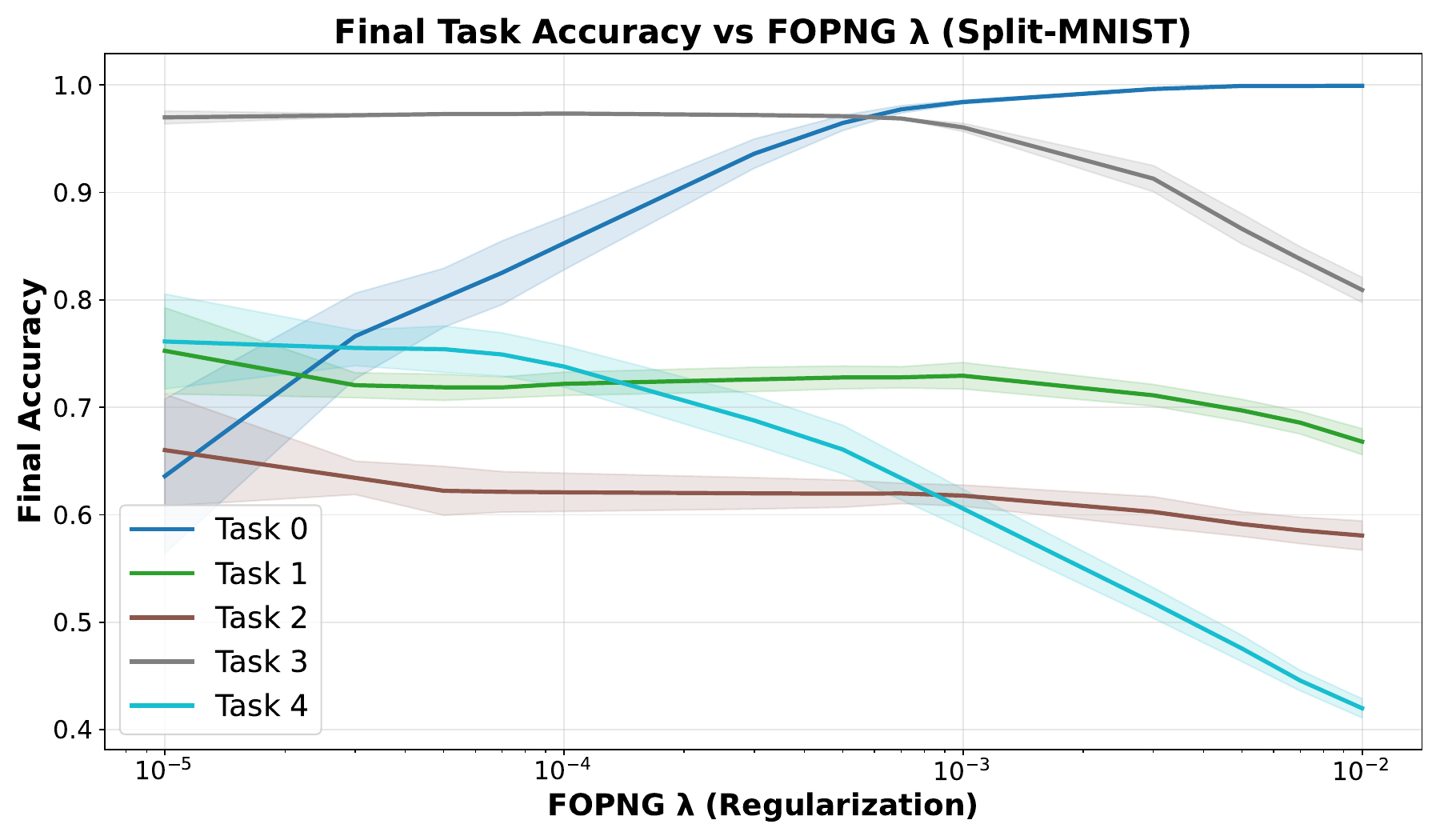}
    \end{minipage}
    \hfill
    \begin{minipage}{0.4\textwidth}
        \centering
        \includegraphics[width=\textwidth]{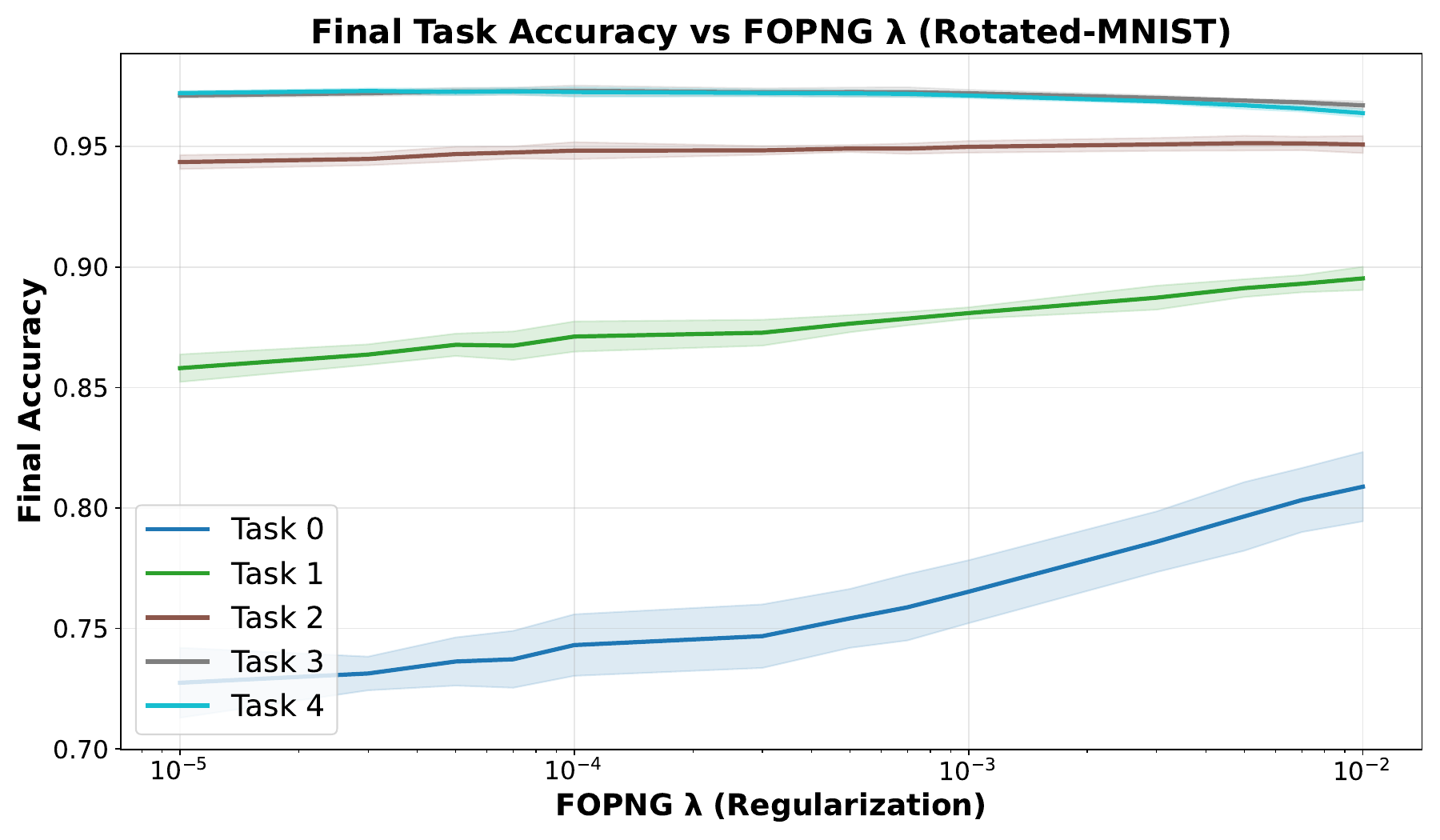}
    \end{minipage}
    \caption{Performance is generally stable in intermediate ranges of $\lambda$. Shaded regions show 95\% confidence intervals across 5 independent runs.}
    \label{fig:lambda_ablation}
\end{figure}

\subsection{New Fisher Weighting $\alpha$}

We conduct a sweep of $\alpha$ over [0.2, 0.3, 0.4, 0.5, 0.6, 0.7, 0.8, 0.9] and measure the final task accuracy for each task on Split-MNIST and Rotated-MNIST, keeping all other parameters fixed. Figure~\ref{fig:alpha_ablation} demonstrates that FOPNG is remarkably robust to the choice of $\alpha$. While we do not have a concrete explanation for this, one possible explanation is that the change in Fisher information between tasks is not significant enough to offset the information loss from exponential averaging.

\begin{figure*}[htbp]
    \centering
    \begin{minipage}{0.4\textwidth}
        \centering
        \includegraphics[width=\textwidth]{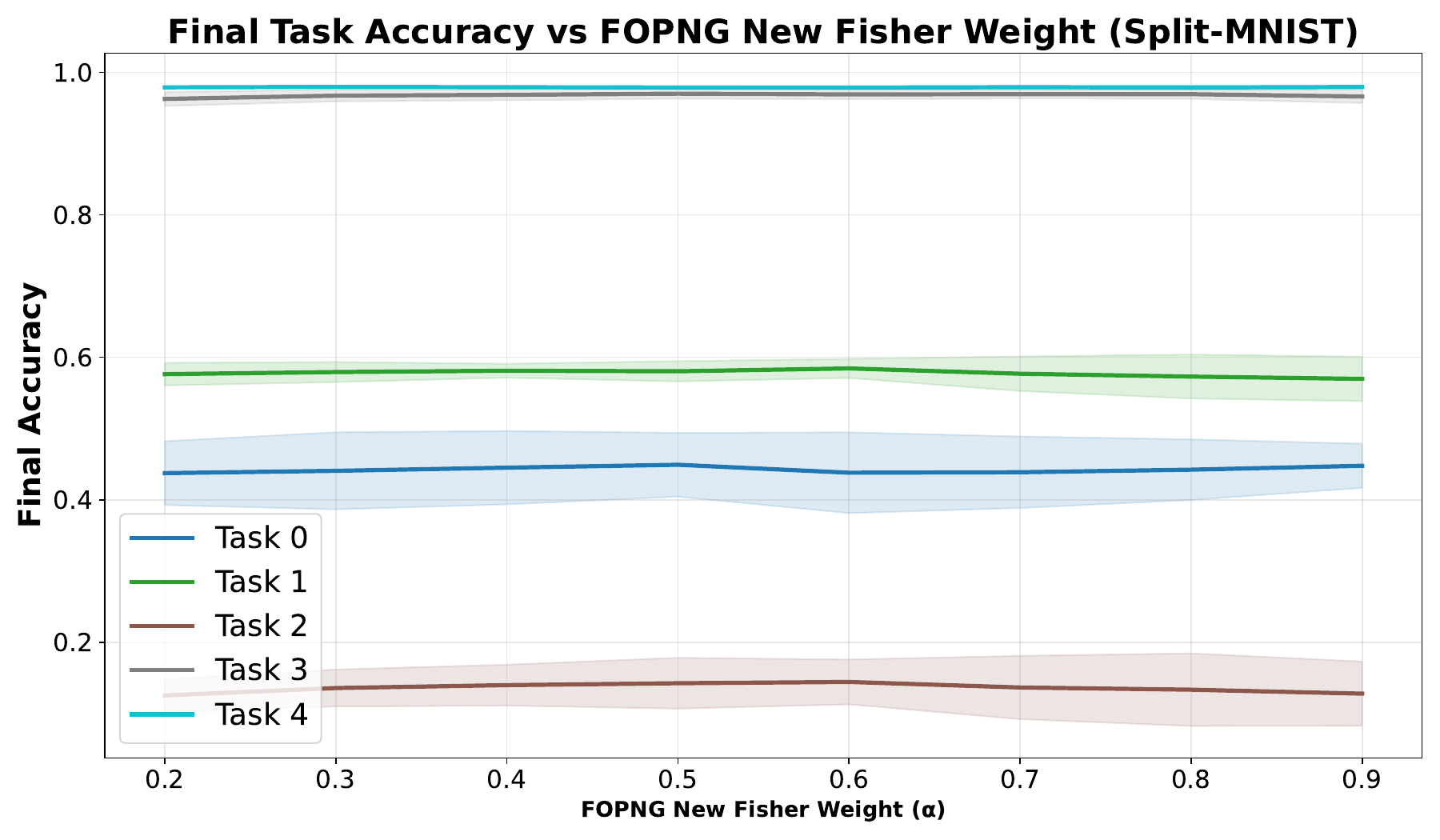}
    \end{minipage}
    \hfill
    \begin{minipage}{0.4\textwidth}
        \centering
        \includegraphics[width=\textwidth]{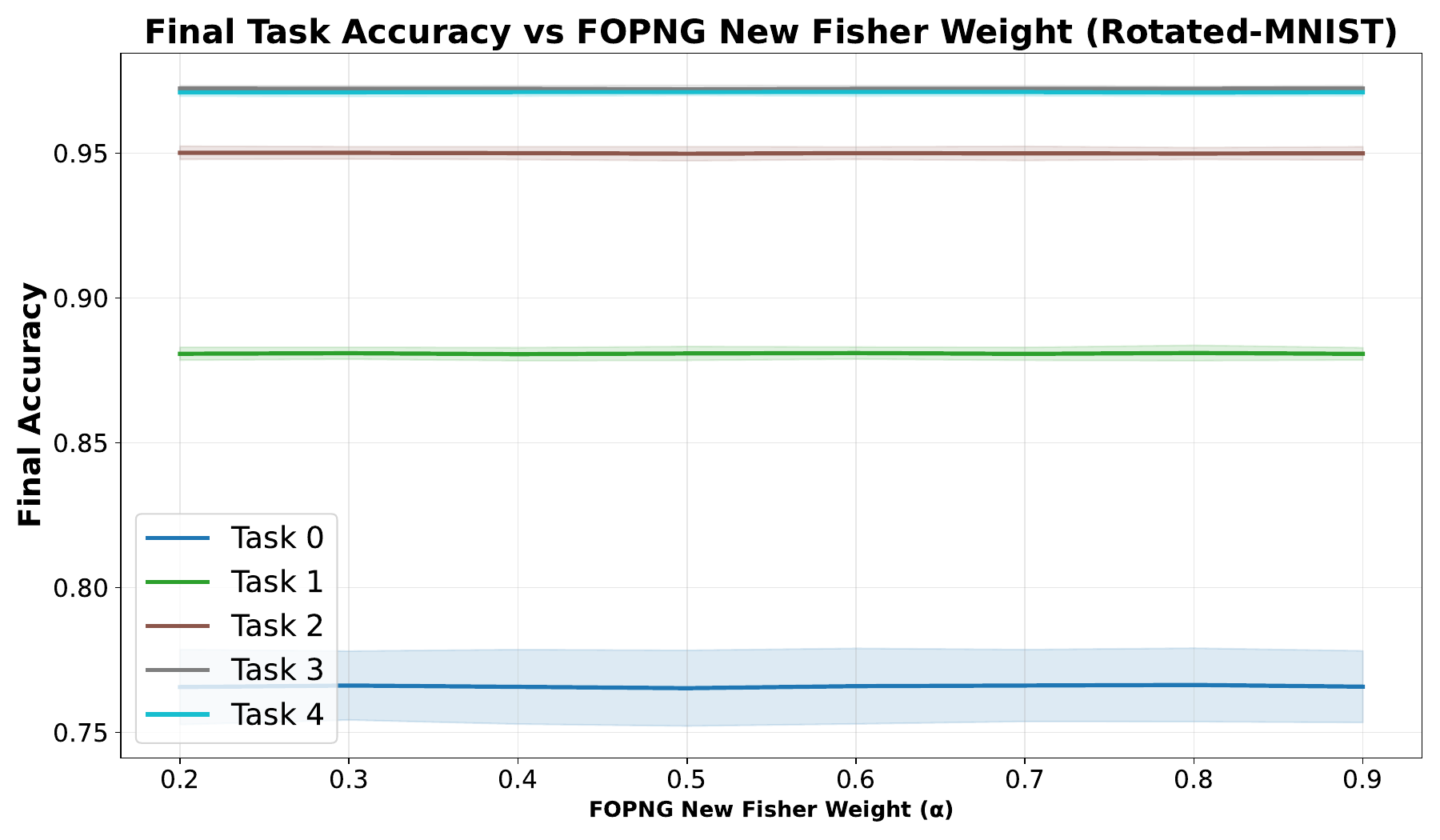}
    \end{minipage}
    \caption{Performance is extremely stable across all values of $\alpha$. Shaded regions show 95\% confidence intervals across 5 independent runs}
    \label{fig:alpha_ablation}
\end{figure*}

\end{document}